\documentclass[twoside, 11pt]{article}
\usepackage{jmlr2e}
%
%
\usepackage{amsmath, amsthm, amssymb, amsfonts, mathtools, graphicx, enumerate}
\usepackage{hyperref}
\usepackage[boxruled,linesnumbered]{algorithm2e}
\SetKwProg{Fn}{Function}{}{}
\usepackage[title,toc,titletoc,page]{appendix}
\usepackage{framed}
\usepackage{color}
\definecolor{darkgreen}{rgb}{0,0.5,0}
\newcommand{\R}{\mathbb{R}}
\newcommand{\C}[1]{\mathcal{#1}}
\newcommand{\B}[1]{\mathbb{#1}}
\newtheorem{theorem}{Theorem}[section]
\newtheorem{corollary}[theorem]{Corollary}
\newtheorem{claim}[theorem]{Claim}

\newtheorem{definition}[theorem]{Definition}

\newtheorem{lemma}[theorem]{Lemma}

\newcommand{\dis}[2]{\phi(#1, #2)}

\usepackage{color, tikz}
\newcount\Comments  
\Comments=1 
\newcommand{\kibitz}[2]{\ifnum\Comments=1\textcolor{#1}{#2}\fi}

\jmlrheading{2017}{A. Modi, N. Jiang, S. Singh \& A. Tewari}

\ShortHeadings{Markov Decision Processes with Continuous Side Information}{ }
\firstpageno{1}

\begin{document}

\title{Markov Decision Processes with Continuous Side Information}

\author{\name Aditya Modi \email admodi@umich.edu \\
        \addr Computer Science and Engineering, Univ. of Michigan Ann Arbor
       \AND
       \name Nan Jiang \email nanjiang@umich.edu \\
       \addr Microsoft Research, New York
       \AND Satinder Singh \email baveja@umich.edu\\
       \addr Computer Science and Engineering, Univ. of Michigan Ann Arbor
       \AND Ambuj Tewari \email tewaria@umich.edu\\
       \addr Department of Statistics, Univ. of Michigan Ann Arbor}


\maketitle

\begin{abstract}
  We consider a reinforcement learning (RL) setting in which the agent interacts with a sequence of episodic MDPs. At the start of each episode the agent has access to some side-information or context that determines the dynamics of the MDP for that episode. Our setting is motivated by applications in healthcare 
  where baseline measurements of a patient at the start of a treatment episode form the context that may provide information about how the patient might respond to treatment decisions. 
  
  We propose algorithms for learning in such Contextual Markov Decision Processes (CMDPs) under an assumption that the unobserved MDP parameters vary smoothly with the observed context. We also give lower and upper PAC bounds under the smoothness assumption. Because our lower bound has an exponential dependence on the dimension, we consider a tractable linear setting where the context is used to create linear combinations of a finite set of MDPs. For the linear setting, we give a PAC learning algorithm based on KWIK learning techniques. 
\end{abstract}

\begin{keywords}
Reinforcement Learning, PAC bounds, KWIK Learning.
\end{keywords}

\section{Introduction}

Consider a basic sequential decision making problem in healthcare, namely that of learning a treatment policy for patients to optimize some health outcome of interest. One could model the interaction with every patient as a Markov Decision Process (MDP). In {\em precision or personalized medicine}, we want the treatment to be personalized to every patient. At the same time, the amount of data available on any given patient may not be enough to personalize well. This means that modeling each patient via a different MDP will result in severely suboptimal treatment policies. The other extreme of pooling all patients' data results in more data but most of it will perhaps not be relevant to the patient we currently want to treat. We therefore face a trade-off between having a large amount of shared data to learn a single policy, and, finding the most relevant policy for each patient. A similar trade-off occurs in other applications involving humans as the agent’s environment such as online tutoring and web advertising.

A key observation is that in many personalized decision making scenarios, we have some side information available about individuals which might help us in designing personalized policies and also help us pool the interaction data across the right subsets of individuals. Examples of such data include laboratory data or medical history of patients in healthcare, user profiles or history logs in web advertising, and student profiles or historical scores in online tutoring. Access to such side information should let us learn better policies even with a limited amount of interaction with individual users. We refer to this side-information as {\em contexts} and adopt an augmented model called \emph{Contextual Markov Decision Process} (CMDP) proposed by \cite{hallak2015contextual}. We assume that contexts are fully observed and available before the interaction starts for each new MDP.\footnote{\cite{hallak2015contextual} assumes \emph{latent} contexts, which results in significant differences from our work in application scenarios, required assumptions, and results. See detailed discussion in Section~\ref{sec:related}.}


In this paper we study the sample complexity of learning in CMDPs 
in the worst case. We consider two concrete settings of learning in a CMDP with continuous contexts. In the first setting, the individual MDPs vary in an arbitrary but smooth manner with the contexts, and we propose our Cover-Rmax algorithm in Section~\ref{sec:covrmax} with PAC bounds. The innate hardness of learning in this general case is captured by our lower bound construction in Section~\ref{sec:lowbnd}. To show that it is possible to achieve significantly better sample complexity in more structured CMDPs, we consider another setting where contexts are used to create linear combinations of a finite set of fixed but unknown MDPs. We use the KWIK framework to devise the KWIK\_LR-Rmax algorithm in Section~\ref{sec:kwikrmax} and also provide a PAC upper bound for the algorithm.

\section{Contextual Markov Decision Process}

\subsection{Problem setup and notation}
\label{subsec:psn}
We start with basic definitions and notations for MDPs, and then introduce the contextual case.
\begin{definition}[Markov Decision Processes]
\label{def:mdp}
A Markov Decision Process (MDP) is defined as a tuple ($\C{S}, \C{A}, p(\cdot|\cdot,\cdot),r(\cdot,\cdot),\mu$) where $\C{S}$ is the state space and $\C{A}$ is the action space; $p(s'|s,a)$ defines the transition probability function for a tuple $(s,a,s')$ where $s,s' \in \C{S}, a \in \C{A}$; and $\mu$ defines the initial state distribution for the MDP.
\end{definition}
We consider the case of fixed horizon (denoted $H$) episodic MDPs. 
We use $\pi_h(s)$ to denote a policy's action for state $s$ at timestep $h$. For each episode, an initial state is observed according to the distribution $\mu$ and afterwards, for $0 \leq h < H$,
the agent chooses an action $a_h = \pi_h(s_h)$ according to a (non-stationary) policy. There is a reward $r_h$ and then a next state $s_{h+1}$ according to the reward and the transition functions. For policy $\pi$ we define its {\em value} as follows:
\begin{align} \label{eq:value}
    V^{\pi}_{M} = \B{E}_{s_0 \sim \mu, M, \pi} \Big[\frac{1}{H} \sum_{h=0}^{H-1} r(s_h,\pi_h(s_h))\Big] .
\end{align}
An optimal policy $\pi^*$ is one that achieves the largest possible value (called optimal value and denoted $V^*_{M}$). Next we define the contextual model which is similar to the definition given by \cite{hallak2015contextual}:
\begin{definition}[Contextual MDP]
\label{def:cmdp}
A contextual Markov Decision Process (CMDP) is defined as a tuple ($\C{C}, \C{S}, \C{A}, \C{M}$) where $\C{C}$ is the context space, $\C{S}$ is the state space, and $\C{A}$ is the action space. $\C{M}$ 
is a function which maps a context $c \in \C{C}$ to MDP parameters $\C{M}(c) = \{p^c(\cdot|\cdot,\cdot), r^c(\cdot,\cdot), \mu^c(\cdot)\}$. 
\end{definition}
We denote the MDP for a context $c$ by $M^c$. We make the simplifying assumption that the initial state distribution is independent of the context and is same for all. 
We assume throughout the paper that the rewards are bounded between 0 and 1. We denote $|\C{S}|,|\C{A}|$ by $S,A$ respectively. We also assume that the context space is bounded, and for any $c \in \C{C}$ the $\ell_2$ norm of $c$ is upper bounded by some constant. 
We consider the online learning scenario with the following protocol: For $t=1,2,\ldots$: 
\begin{enumerate}
    \item Observe context $c_t \in \C{C}$. 
    \item Choose a policy $\pi_t$ (based on $c_t$ and previous episodes).
    \item Experience an episode in $M^{c_t}$ using $\pi_t$. 
\end{enumerate}
We do not make any distributional assumptions over the context sequence. Instead, we allow the sequence to be chosen in an arbitrary and potentially  \emph{adversarial} manner. 
A natural criteria for judging the efficiency of the algorithm is to look at the number of episodes where it performs sub-optimally. The main aim of the PAC analysis is to bound the number of episodes where we have $V^{\pi_t}_{M^{c_t}} < V^*_{M^{c_t}} - \epsilon$, i.e., the value of the algorithm's policy is not $\epsilon$-optimal \citep{dann2015sample}. Although, we do give PAC bounds for the Cover-Rmax algorithm given below, the reader should make note that, we have not made explicit attempts to achieve the tightest possible result. We use the Rmax \citep{brafman2002r} algorithm as the base of our construction to handle exploration-exploitation because of its simplicity. Our approach can also be combined with the other PAC algorithms \citep{strehl2008analysis, dann2015sample} for improved dependence on $S$, $A$ and $H$.

\section{Cover-Rmax}
\label{sec:covrmax}
In this section, we present the Cover-Rmax algorithm and provide a PAC bound for it under a smoothness assumption. The key motivation for our contextual setting is that sharing information among different contexts is helpful. Therefore, it is natural to assume that the MDPs corresponding to similar contexts will themselves be similar. This can be formalized by the following smoothness assumption:
\begin{definition}[Smoothness]
\label{def:smcmdp}
Given a CMDP ($\C{C}, \C{S}, \C{A}, \C{M}$), and a distance metric over the context space $\dis{\cdot}{\cdot}$, if for any two contexts $c_1$, $c_2 \in \C{C}$, we have the following constraints:
\begin{align*}
    \|p^{c_1}(\cdot | s,a) - p^{c_2}( \cdot | s,a)\|_1 \leq L_p \dis{c_1}{c_2}\\
    |r^{c_1}(s,a) - r^{c_2}(s,a)| \leq L_r \dis{c_1}{c_2}
\end{align*}
then, we call this a smooth CMDP with smoothness parameters $L_p$ and $L_r$.
\end{definition}
We assume that the distance metric and the constants $L_p$ and $L_r$ are known. This smoothness assumption allows us to use a minimally tweaked version of Rmax \citep{brafman2002r} and provide an analysis for smooth CMDPs similar to existing literature on MDPs\citep{kearns2002near,strehl2009reinforcement, strehl2008analysis}. If we know the transition dynamics and the expected reward functions for each state-action pair in a finite MDP, we can easily compute the optimal policy. The idea of Rmax is to distinguish the state-action pairs as \emph{known} or \emph{unknown}: 
a state-action pair is known if we have visited it for enough number of times, so that the empirical estimates of reward and transition probabilities are near-accurate due to sufficient data. A state $s$ becomes \emph{known} when all for all actions $a$ the pairs $(s,a)$ become \emph{known}.
Rmax then constructs an auxiliary MDP which encourages optimistic behaviour by assigning maximum reward (hence the name Rmax) to the remaining \emph{unknown} states. When we act according to the optimal policy in the auxiliary MDP, one of the following must happen: 1) we exploit the information available and achieve near-optimal value, or, 2) we visit unknown states and accumulate more information efficiently. 

Formally, for a set of known states $K$, we define an (approximate) \emph{induced MDP} $\hat{M}_K$ in the following manner. Let $n(s,a)$ and $n(s,a,s')$ denote the number of observations of state-action pair $(s,a)$ and transitions $(s,a,s')$ respectively. Also, let $R(s,a)$ denote the total reward obtained from state-action pair $(s,a)$. For each $s \in K$, define the values
\begin{align}
\label{eq:emp_freq}
\begin{split}
    & p_{\hat{M}_K}(s'|s,a) =  \frac{n(s,a,s')}{n(s,a)}, \\
    &  r_{\hat{M}_K}(s,a) = R(s,a) / n(s,a).
\end{split}
\end{align}
For each $s \notin K$, define the values as $p_{\hat{M}_K}(s'|s,a) = \B{I}\{s' = s\}$ and $r_{\hat{M}_K}(s,a) = 1$.

We use the certainty equivalent policy computed for this induced MDP and perform balanced wandering \citep{kearns2002near} for unknown states. Balanced wandering ensures that all actions are tried equally and fairly for \emph{unknown} states. Assigning maximum reward to the \emph{unknown} states pushes the agent to visit these states and provides the necessary exploration impetus. The generic template of Rmax is given in Algorithm~\ref{alg:Rmax}.

\begin{small}
\IncMargin{2em}
\begin{algorithm}[ht]
    \label{alg:Rmax}
    \SetAlgoNoLine
    $Initialize(S,A,\C{C}, \epsilon, \delta)$\;
    \For{each episode $t = 1,2,\cdots$}{
        Receive context $c_t \in \C{C}$\;
        Set $K$, $M_K$ using $Predict(c_t,s,a)$ for all $(s,a)$. $\pi \leftarrow \pi^*_{M_K}$\;
        \For{$h=0,1,\cdots H-1$}{
            \eIf{$s_h \in K$}{
                Choose $a_h := \pi_h(s_h)$\;
            }{ \label{lin:balance_wander}
                Choose $a_h$ : $(s_h,a_h)$ is \emph{unknown}\;
                $Update(c_t, s_h, a_h, (s_{h+1}, r_h))$\;
            }
        }
    }
    \caption{Rmax Template for CMDP}
\end{algorithm}
\DecMargin{2em}
\end{small}

For the contextual case, we have an infinite number of such MDPs. The idea behind our algorithm 
is that, one can group close enough contexts and treat them as a single MDP. 
Utilizing the boundedness of the context space $\C{C}$, we create a \emph{cover} of $\C{C}$ with finitely many balls $B_r(o_i)$ of radius $r$ centered at $o_i \in \R^d$. By tuning the radius $r$, we can control the bias introduced by ignoring the differences among the MDPs in the same ball.
Doing so allows us to pool together the data from all MDPs in a ball, so that we avoid the difficulty of infinite MDPs and instead only deal with finitely many of them. 
The size of the cover, i.e., the number of balls can be measured by the notion of \emph{covering numbers} (see e.g., \cite{cov_num}), defined as 
\begin{align*}
\C{N}(\C{C},r) = \min\{|\C{Y}|\,: \, \C{C} \subseteq \cup_{y \in \C{Y}} \,\, B_r(y)\}.
\end{align*} 
The resulting algorithm, Cover-Rmax, is obtained by using the subroutines in Algorithm~\ref{alg:cover_rmax}, and we state its sample complexity guarantee in Theorem~\ref{thm:pac}.

\begin{small}
\IncMargin{2em}
\begin{algorithm}[!htpb]
    \label{alg:cover_rmax}
    \SetAlgoNoLine
    \Fn{Initialize($\C{S}, \C{A}, \C{C}, \epsilon, \delta$)}{
        $r_0 = \min (\tfrac{\epsilon}{8HL_p},\tfrac{\epsilon}{8L_r})$\;
        Create an $r_0$-cover of $\C{C}$\;
        Initialize counts for all balls $\C{B}(o_i)$\;
    }
    \Fn{Predict($c, s, a$)}{
        Find $j$ such that $c \in \C{B}(o_j)$\;
        \eIf{$n_j(s,a) < m$}{
            \Return $\hat{p}^c(\cdot|s,a)$ and $\hat{r}^c(s,a)$ using (\ref{eq:emp_freq})\;
        }{
            \Return \emph{unknown}\;
        }
    }
    \Fn{Update($c, s, a, (s',r)$)}{
        Find $j$ such that $c \in \C{B}(o_j)$\;
        \If{$n_j(s,a) < m$}{
            Increment counts and rewards in $\C{B}(o_j)$\;
        }
    }
    \caption{Cover-Rmax}
\end{algorithm}
\DecMargin{2em}
\end{small}
\begin{theorem}[PAC bound for Cover-Rmax]
\label{thm:pac}
For any input values $0<\epsilon, \delta \leq 1$ and a CMDP with smoothness parameters $L_p$ and $L_r$, with probability at least $1-\delta$, the Cover-Rmax algorithm produces a sequence of policies $\{\pi_t\}$ which yield at most
\begin{align*}
    \C{O}\Big(\frac{NH^2SA}{\epsilon^3} \big(S + \ln {\frac{NSA}{\delta}} \ln{\frac{N}{\delta}}\big)\Big)
\end{align*}
non-$\epsilon$-optimal episodes, where $N=\C{N}(\C{C},r_0)$ and $r_0 = \min (\tfrac{\epsilon}{8HL_p}, \tfrac{\epsilon}{8L_r})$.
\end{theorem}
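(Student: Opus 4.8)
The plan is to reduce the analysis to a finite-MDP Rmax argument applied to the $N = \C{N}(\C{C}, r_0)$ balls of the cover, treating each pair (ball, state-action) as a single \emph{effective} state-action pair whose known/unknown status is governed by the count $n_j(s,a)$ against the threshold $m$, so that the whole exploration budget is $NSAm$. First I would control the \emph{bias} introduced by pooling data across distinct contexts inside a common ball. Since every context assigned to ball $j$ lies within $r_0$ of the center $o_j$, any two such contexts satisfy $\dis{c_1}{c_2} \le 2r_0$; hence by the smoothness assumption the pooled empirical estimate is, in expectation, a convex combination $\bar p(\cdot|s,a) = \sum_i w_i\, p^{c_i}(\cdot|s,a)$ of within-ball dynamics, and by the triangle inequality $\|\bar p(\cdot|s,a) - p^{c}(\cdot|s,a)\|_1 \le 2 L_p r_0$ and $|\bar r(s,a) - r^c(s,a)| \le 2 L_r r_0$. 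The choice $r_0 = \min(\tfrac{\epsilon}{8HL_p}, \tfrac{\epsilon}{8L_r})$ makes these biases at most $\tfrac{\epsilon}{4H}$ and $\tfrac{\epsilon}{4}$ respectively.

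Next I would add the \emph{statistical} error, where the main subtlety lies: the $m$ samples pooled in a ball are independent but \emph{not} identically distributed, since they are generated by different MDPs $M^{c_i}$ with $c_i \in \C{B}(o_j)$. I would therefore apply a martingale/Hoeffding concentration to show that the empirical frequencies concentrate around the mixture mean $\bar p$ rather than a single distribution, and combine it with a method-of-types style $\ell_1$ bound (so as to retain only a single factor of $S$) to obtain, with probability $1-\delta'$ per effective pair, $\|\hat p^c(\cdot|s,a) - \bar p(\cdot|s,a)\|_1 \le \tfrac{\epsilon}{4H}$ and $|\hat r^c - \bar r| \le \tfrac{\epsilon}{4}$ once $m = \Theta\big(\tfrac{H^2}{\epsilon^2}(S + \ln\tfrac{NSA}{\delta})\big)$. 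A union bound over all $NSA$ effective pairs sets $\delta' = \delta/(NSA)$. Combining bias and variance, the induced MDP $\hat M_K$ agrees with $M^{c_t}$ on known states to within $\tfrac{\epsilon}{2H}$ in transitions and $\tfrac{\epsilon}{2}$ in rewards.

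I would then invoke the standard simulation lemma for the normalized value in \eqref{eq:value}: such per-step accuracies translate into a value gap of $\C{O}(\epsilon)$, which yields optimism, $V^*_{\hat M_K} \ge V^*_{M^{c_t}} - \epsilon$, together with the implicit explore-or-exploit inequality $V^{\pi_t}_{M^{c_t}} \ge V^{\pi_t}_{\hat M_K} - \Pr[\text{escape}_t]$, where $\text{escape}_t$ is the event that $\pi_t$ visits an unknown pair during episode $t$. Consequently, after rescaling $\epsilon$ by constants, any non-$\epsilon$-optimal episode must have escape probability exceeding $\epsilon$. Finally I would bound the number of such episodes by a counting argument: each unknown visit increments some $n_j(s,a)$, and a pair stops being unknown after $m$ visits, so the total number of unknown visits is at most $NSAm$; applying Azuma's inequality to $\sum_t(\B{I}[\text{escape}_t] - \Pr[\text{escape}_t])$ shows that with high probability the number of episodes with escape probability above $\epsilon$ is $\C{O}\big(\tfrac{NSAm}{\epsilon}\ln\tfrac{N}{\delta}\big)$. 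Substituting the value of $m$ gives the claimed bound, and a closing union bound over the concentration and counting failure events keeps the total failure probability below $\delta$.

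I expect the non-i.i.d. pooling in the concentration step to be the principal obstacle, since it is precisely what separates this analysis from the textbook single-MDP Rmax proof and forces the clean split into a smoothness-controlled bias term and a martingale-controlled variance term; getting the $\ell_1$ transition bound with the correct $S$- and $H$-dependence, and hence the final $\epsilon^{-3}$ rate, is the delicate quantitative point.
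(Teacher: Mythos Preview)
Your proposal is correct and mirrors the paper's proof almost exactly: the same bias--variance decomposition for pooled samples within a ball (with the same choice of $r_0$ and the same $m=\Theta\!\big(\tfrac{H^2}{\epsilon^2}(S+\ln\tfrac{SA}{\delta})\big)$ obtained via the $2^S$-subset $\ell_1$ concentration), followed by the simulation lemma, the induced-MDP optimism/escape inequality, and the Hoeffding/Azuma counting of escape episodes against the $mSA$ exploration budget. The only cosmetic difference is that the paper applies the single-MDP Rmax bound (their Lemma~\ref{lem:com}) separately within each of the $N$ balls and then union-bounds with $\delta/N$, whereas you run the counting argument globally over all $NSA$ effective pairs at once; both routes yield the same $\C{O}\!\big(\tfrac{NSAm}{\epsilon}\ln\tfrac{N}{\delta}\big)$ outcome.
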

\begin{proof}[Proof sketch]
We first of all carefully adapt the analysis of Rmax by \cite{kakade2003sample} to get the PAC bound for an episodic MDP. Let $m$ be the number of visits to a state-action pair after which the model's estimate $\hat{p}(\cdot|s,a)$ for $p(\cdot|s,a)$ has an $\ell_1$ error of at most $\epsilon/4H$ and reward estimate $\hat{r}(s,a)$ has an absolute error of at most $\epsilon/4$. We can show that:
\begin{lemma}[]
\label{lem:com}
Let $M$ be an MDP with the fixed horizon $H$. If $\hat{\pi}$ is the optimal policy for $\hat{M}_K$ as computed by Rmax, then for any starting state $s_0$, with probability at least $1-2\delta$, we have 
$V^{\hat{\pi}}_{M} \geq V^*_M - 2\epsilon$ 
for all but $\C{O} (\frac{mSA}{\epsilon} \ln{\frac{1}{\delta}})$ episodes.
\end{lemma}
Now instead of learning the model for each contextual MDP separately, we combine the data within each ball. Therefore, we have to take care of two things: choose the radius $r_0$ for a fine enough cover and a value of $m$ which is the number of visits after which a state becomes \emph{known} for a ball. For satisfying the conditions of Lemma~\ref{lem:com} for all MDPs within a ball, we need the radius $r$ of the cover to be $r \leq r_0 = \min \big(\tfrac{\epsilon}{8HL_p}, \tfrac{\epsilon}{8L_r}\big)$ and the value of $m = \frac{128(S\ln 2+\ln{\frac{SA}{\delta})H^2}}{\epsilon^2}$. Using Lemma~\ref{lem:com}, we obtain an upper bound on number of non-$\epsilon$ episodes in a single ball as $\C{O}\big(\frac{H^2SA}{\epsilon^3} \big(S + \ln {\frac{SA}{\delta}} \ln{\frac{1}{\delta}}\big)\big)$ with probability at least $1-\delta$. 

Setting the individual failure probability to be $\delta/N(\C{C},r_0)$ and using the union bound, we get the stated PAC bound.\footnote{For detailed proofs, we refer the reader to the appendix.} 
\end{proof}
We observe that the PAC bound has linear dependence on the covering number of the context space. In case of a $d$-dimensional Euclidean metric space, the covering number would be of the order $O(\frac{1}{r^d})$. However, we show in Section~\ref{sec:lowbnd}, that, the dependence would be at least linear, and hence, indicate the difficulty of optimally learning in such cases.

\subsection{Lower Bound}
\label{sec:lowbnd}
We prove a lower bound on the number of sub-optimal episodes for any learning algorithm in a smooth CMDP which shows that a linear dependence on the covering number of the context space is unavoidable. As far as we know, there is no existing way of constructing PAC lower bounds for continuous state spaces with smoothness, so we cannot simply augment the state representation to include context information. Instead, we prove our own lower bound in Theorem~\ref{thm:smooth_lower} which builds upon the work of \cite{dann2015sample} on lower bounds for episodic finite MDPs and of \cite{slivkins2014contextual} on lower bounds for contextual bandits. 
\begin{theorem}[Lower bound for smooth CMDP] \label{thm:smooth_lower}
There exists constants $\delta_0, \epsilon_0$, such that for every $\delta \in (0,\delta_0)$ and $\epsilon \in (0,\epsilon_0)$, any algorithm that satisfies a PAC guarantee for $(\epsilon,\delta)$ and computes a sequence of deterministic policies for each context, there is a hard CMDP $(\C{C}, \C{S}, \C{A}, \C{M})$ with smoothness  constant $L_p = 1$, 
such that
\begin{align}
\B{E}[B] = \Omega\Big( \tfrac{\C{N}(\C{C},\epsilon_1)SA}{\epsilon^2}\Big)
\end{align}
where $B$ is the number of sub-optimal episodes and  $\epsilon_1 = \tfrac{1280 H \epsilon e^4}{(H-2)}$ . 
\label{thm:cmdp_low}
\end{theorem}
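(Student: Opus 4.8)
The plan is to combine a packing argument over the context space, in the spirit of \cite{slivkins2014contextual}, with the single-MDP information-theoretic lower bound of \cite{dann2015sample} applied independently in each cell of the packing. First I would pass from covering to packing: by the standard inequality between the two, there is a set of centers $o_1,\dots,o_N \in \C{C}$, pairwise at distance at least $\epsilon_1$ under $\dis{\cdot}{\cdot}$, with $N \geq \C{N}(\C{C},\epsilon_1)$ (a maximal $\epsilon_1$-packing is automatically an $\epsilon_1$-cover). These $N$ centers index the independent hard sub-problems whose costs will be summed.

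Next I would build the hard CMDP by planting an independent instance near each center. Fix a base MDP $M_0$ over $S$ states and $A$ actions together with a \cite{dann2015sample}-style family of perturbations $\Delta^{\theta}$, where $\theta$ encodes the unknown optimal action at each state, the individual transition perturbations are of order $\epsilon$, and the total $\ell_1$ perturbation is $O(\epsilon_1)$. Let $g_i(c) = \max\big(0,\, 1 - 2\dis{c}{o_i}/\epsilon_1\big)$ be a tent function supported in the ball of radius $\epsilon_1/2$ about $o_i$, and set $\C{M}(c) = M_0 + \sum_{i=1}^{N} g_i(c)\,\Delta^{\theta_i}$. Since the centers are $\geq \epsilon_1$ apart the supports are disjoint, so in a neighborhood of each $o_i$ the map equals a single planted instance, while the transition map is Lipschitz with constant $(2/\epsilon_1)\cdot O(\epsilon_1)$, which is made $\leq L_p = 1$ by the choice of constants in $\epsilon_1 = \tfrac{1280 H\epsilon e^4}{H-2}$. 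The same choice keeps each instance's value gap (under the $1/H$ reward normalization of (\ref{eq:value})) at $\Theta(\epsilon)$, so that choosing the wrong $\theta_i$ yields a non-$\epsilon$-optimal episode.

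For the per-cell bound I would restrict attention to the episodes in which $o_i$ is presented and run the change-of-measure argument of \cite{dann2015sample}. Placing a uniform prior over $\theta_i$ and using that the learner commits to a deterministic policy, driving the probability of a sub-optimal action in any given state below a constant requires $\Omega(1/\epsilon^2)$ visits to each of that state's $A$ actions; summing over the $S$ states forces $\Omega(SA/\epsilon^2)$ sub-optimal episodes in expectation at $o_i$. The $e^4$ factor enters precisely here, bounding the likelihood ratio between competing instances so that the KL divergence per episode is $O(\epsilon^2)$ and the two-point testing error stays constant.

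Finally I would show the per-cell costs add. Because the perturbations have disjoint support in $\C{C}$ and share the base $M_0$, the likelihood of the full interaction history factorizes across cells: episodes at $o_j$ are uninformative about $\theta_i$ for $j \neq i$, so the posterior over each $\theta_i$ can cross the PAC threshold only through episodes at $o_i$. Summing the $N \geq \C{N}(\C{C},\epsilon_1)$ independent lower bounds gives $\B{E}[B] = \Omega\big(\C{N}(\C{C},\epsilon_1)SA/\epsilon^2\big)$. I expect the main obstacle to be discharging all three requirements simultaneously --- global $L_p = 1$ smoothness, $\Theta(\epsilon)$ value gaps with $\Omega(1/\epsilon^2)$ indistinguishability in each cell, and exact factorization of the likelihood across cells --- and in particular verifying that embedding the bandit-style testing problem into an episodic MDP preserves the $1/\epsilon^2$ rate after the $1/H$ averaging, which is exactly the calculation that fixes the value of $\epsilon_1$.
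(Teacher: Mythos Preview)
Your proposal is correct and essentially matches the paper's proof: plant independent Dann--Brunskill hard instances at the points of an $\epsilon_1$-packing of $\C{C}$, interpolate to enforce $L_p=1$ smoothness, argue that the likelihood factorizes across packing points so the per-cell $\Omega(SA/\epsilon^2)$ bounds from Theorem~\ref{thm:mdp_low} add, and convert packing to covering via $\C{N}\le\C{D}$. The only cosmetic difference is the interpolant---your additive tent functions $g_i(c)=\max(0,1-2\dis{c}{o_i}/\epsilon_1)$ versus the paper's max-based decay $p_c(+|i,a)=\max_{c'\in Z}\max(1/2,\,p_{c'}(+|i,a)-\dis{c}{c'}/2)$---but both yield the required Lipschitz constant and make the construction reduce to the base MDP away from packing centers.
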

\begin{figure}
    \centering
    \includegraphics[width=0.75\columnwidth]{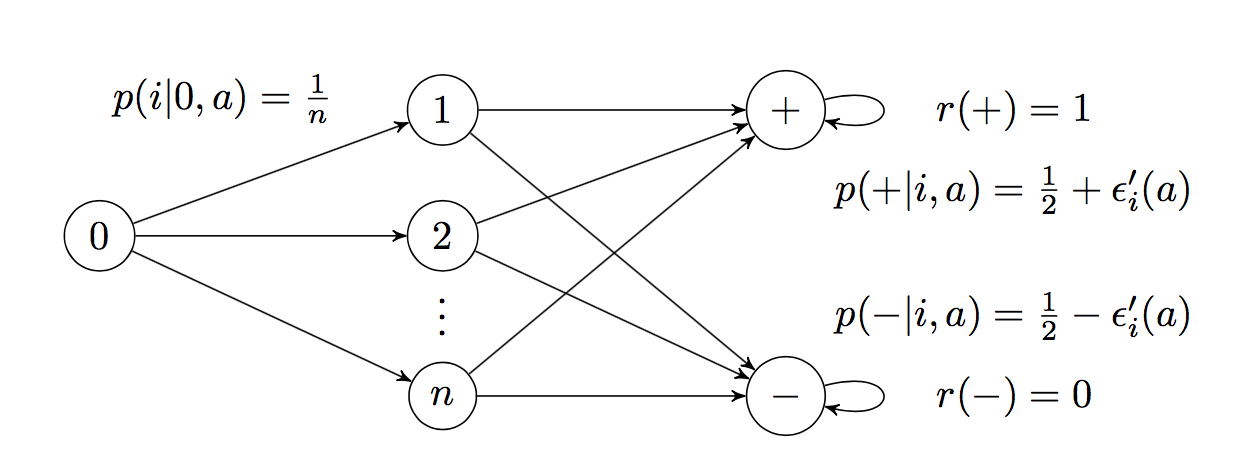}
    \caption{Hard instances for episodic MDP \citep{dann2015sample}. The initial state $0$ moves to a uniform distribution over states $1$ to $n$ regardless of the action, and states $+$/$-$ are absorbing with $1$ and $0$ rewards respectively. States $0$ to $n$ have $0$ reward for all actions. Each  state $i \in [n]$ essentially acts as a hard bandit instance, whose $A$ actions move to $+$ and $-$ randomly. Action $a_0$ satisfies $p(+|i,a_0) = \tfrac{1}{2}+\tfrac{\epsilon'}{2}$ and there is at most one other action $a_i$ with $p(+|i,a_i) = \tfrac{1}{2}+\epsilon'$. Any other action $a_j$ satisfies $p(+|i, a_j) =  \tfrac{1}{2}$.}
    \label{fig:lowbnd}
\end{figure}
\begin{proof}
The overall idea is to embed multiple MDP learning problems in a CMDP, such that the agent has to learn the optimal policy in each MDP separately and cannot generalize across them. We show that the maximum number of problems that can be embedded scales with the covering number, and the result follows by incorporating known PAC lower bound for episodic MDPs.

We start with the lower bound for learning in episodic MDPs. See Figure~\ref{fig:lowbnd} and its caption for details. The construction is due to \cite{dann2015sample} and we adapt their lower bound statement to our setting in Theorem~\ref{thm:mdp_low}.  
\begin{theorem}[ Lower bound for episodic MDPs \citep{dann2015sample} ]
\label{thm:mdp_low}
There exists constants $\delta_0, \epsilon_0$, such that for every $\delta \in (0,\delta_0)$ and $\epsilon \in (0,\epsilon_0)$, any algorithm that satisfies a PAC guarantee for $(\epsilon,\delta)$ and computes a sequence of deterministic policies, there is a hard instance $M_{\text{hard}}$ so that
$
\B{E}[B] = \Omega\Big( \tfrac{SA}{\epsilon^2}\Big)
$, 
where $B$ is the number of sub-optimal episodes. 
The constants can be chosen as $\delta_0 = \frac{e^{-4}}{80}$, $\epsilon_0 = \frac{H-2}{640 H e^4}$.\footnote{The lower bound here differs from that in the original paper by $H^2$, because our value is normalized (see Eq.(\ref{eq:value})), whereas they allow the magnitude of value to grow with $H$.}
\end{theorem}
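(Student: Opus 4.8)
The plan is to follow \cite{dann2015sample} and realize the hard instance $M_{\text{hard}}$ of Figure~\ref{fig:lowbnd} as a collection of $n$ independent best-arm-identification problems that the learner is forced to solve one state at a time. First I would fix the per-problem gap parameter $\epsilon'$ and compute the value of an arbitrary deterministic policy $\pi$. Because state $0$ transitions uniformly into $\{1,\dots,n\}$, states $0,\dots,n$ carry zero reward, and $+$ is absorbing with unit reward, a policy that selects $\pi(i)$ at state $i$ reaches $+$ exactly on the steps $h=2,\dots,H-1$, so by the normalized value in Eq.~(\ref{eq:value}) one gets $V^{\pi}_{M} = \tfrac{H-2}{H}\cdot \tfrac1n \sum_{i=1}^{n} p(+\mid i,\pi(i))$. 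Consequently the optimality gap of $\pi$ is $\tfrac{H-2}{H}\cdot\tfrac1n\sum_i \big(\max_a p(+\mid i,a) - p(+\mid i,\pi(i))\big)$, where each per-state term is $0$ for the best action, $\epsilon'/2$ for the reference action $a_0$, and up to $\epsilon'$ for a bad action. Thus an episode is non-$\epsilon$-optimal precisely when the average per-state gap exceeds $\tfrac{H}{H-2}\epsilon$, and being $\epsilon$-optimal forces $\pi$ to identify the hidden good action (or its absence) in all but a small fraction of the $n$ states.

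The heart of the argument is the single-state lower bound. For each state $i$ I would set up a family of bandit instances indexed by the identity $j$ of the hidden good action among the non-reference actions (with $p(+\mid i,a_j)=\tfrac12+\epsilon'$, the remaining non-reference actions at $\tfrac12$, and the reference action $a_0$ at $\tfrac12+\epsilon'/2$) together with a null instance in which no action beats $a_0$. A standard change-of-measure argument then shows that any procedure distinguishing these instances, equivalently selecting an $\epsilon'$-optimal action with the required confidence, must pull the actions of state $i$ at least $\Omega(A/(\epsilon')^2)$ times. I would carry this out by bounding the KL divergence between the trajectory distributions induced by the null and the $j$-th instance in terms of $N_{ij}$, the number of pulls of action $a_j$ at state $i$, invoking Pinsker's (or Bretagnolle--Huber's) inequality, and averaging over $j$ so that unless $\sum_j N_{ij}=\Omega(A/(\epsilon')^2)$ the learner errs on the good-action test with constant probability.

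Finally I would patch the $n$ single-state bounds together. Since each episode produces exactly one visit to a uniformly random state, after $T$ episodes the expected number of visits to any fixed state is $T/n$; by a counting/averaging argument a constant fraction of the states must still have received fewer than $\Omega(A/(\epsilon')^2)$ visits until $T=\Omega(nA/(\epsilon')^2)$. On each such episode the still-unsolved states contribute enough to the average gap that $\pi_t$ is non-$\epsilon$-optimal with constant probability, giving $\B{E}[B]=\Omega(nA/(\epsilon')^2)$. Substituting $n=\Theta(S)$ and choosing $\epsilon'=\Theta\big(\tfrac{H}{H-2}\epsilon\big)$ converts this into $\B{E}[B]=\Omega(SA/\epsilon^2)$, and tracking the constants through the Bretagnolle--Huber step and the $\tfrac{H-2}{H}$ normalization yields the claimed $\delta_0=e^{-4}/80$ and $\epsilon_0=\tfrac{H-2}{640He^4}$. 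The main obstacle will be the information-theoretic single-state bound: making the change of measure simultaneously handle the adaptive, policy-dependent allocation of pulls across actions and the averaging over the hidden alternatives, while keeping the constants tight enough to recover the stated $\delta_0,\epsilon_0$ rather than merely the $\Omega(\cdot)$ rate.
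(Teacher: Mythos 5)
The paper does not actually prove Theorem~\ref{thm:mdp_low}: it is imported verbatim from \cite{dann2015sample}, with the only ``proof content'' being the footnote explaining that the $H^2$ discrepancy with the original statement comes from the normalization in Eq.~(\ref{eq:value}). So there is no in-paper argument to compare against; the right comparison is with the original proof of Dann and Brunskill, and your sketch is a faithful outline of it. The decomposition you describe --- value of a deterministic policy equals $\tfrac{H-2}{H}$ times the average of $p(+\mid i,\pi(i))$ over the $n$ bandit states, reduction of $\epsilon$-optimality to identifying the hidden good action in most states, a Mannor--Tsitsiklis-style change-of-measure lower bound per state against a null instance with averaging over the hidden alternative, and a counting argument over the uniformly visited states --- is exactly the structure of their argument, and the constants $e^{-4}$ and the factors of $80$ and $640$ indeed originate in that single-state information-theoretic lemma.

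Two caveats. First, the heart of the proof is the single-state lemma you explicitly defer (handling adaptive, history-dependent allocation of pulls under the change of measure); as written your proposal is an outline rather than a proof, and that lemma is where essentially all the work lies. Second, be careful with the final step: the claim is a lower bound on $\B{E}[B]$, the expected number of non-$\epsilon$-optimal \emph{episodes}, for any algorithm that is $(\epsilon,\delta)$-PAC on every instance. You need to convert ``state $i$ has received too few samples'' into ``the episode's policy is non-$\epsilon$-optimal with constant probability over the adversary's random instance,'' and then argue that a constant fraction of the first $\Omega(nA/(\epsilon')^2)$ episodes are charged as mistakes in expectation; your averaging argument gestures at this but the bookkeeping (mistakes are counted per episode, one state visited per episode, and the average-gap threshold involves the factor $160e^4$ between $\epsilon'$ and $\tfrac{H}{H-2}\epsilon$) is exactly where the stated $\delta_0$ and $\epsilon_0$ come from. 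Neither caveat is a wrong turn --- they are the parts of \cite{dann2015sample} that this paper chose to cite rather than reproduce.
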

Now we discuss how to populate the context space with these hard MDPs. Note in Figure~\ref{fig:lowbnd} that, the agent does not know which action is the most rewarding ($a_i$), and the adversary can choose $i$ to be any element of $[A]$ (which is essentially choosing an instance from a family of MDPs). In our scenario, we would like to allow the adversary to choose the MDP \emph{independently} for each individual packing point to yield a lower bound linear in the packing number. However, this is not always possible due to the smoothness assumption, as committing to an MDP at one point may restrict the adversary's choices in another point.

To deal with this difficulty, we note that any pair of hard MDPs  differ from each other by $O(\epsilon')$ in transition distributions. Therefore, we construct a packing of $\C{C}$ with radius $r = 8\epsilon'$, defined as a set of points $Z$ such that any two points in $Z$ are at least $r$ away from each other. The maximum size of such $Z$ is known as the \emph{packing number}:
\begin{align*}
\C{D}(\C{C},r) = \max \{|Z|:\, Z \text{ is an } r\text{-packing of }\C{C}\},
\end{align*}
which is related to the covering number as $N(\C{C}, r) \le \C{D}(\C{C},r)$. 
The radius $r$ is chosen to be $O(\epsilon')$ so that arbitrary choices of hard MDP instances at different packing points always satisfy the smoothness assumption (recall that $L_p=1$).  
Once we fix the MDPs for all $c\in Z$, we specify the MDP for $c \in \C{C} \setminus Z$ as follows: for state $i$ and action $a$,
\begin{align*}
p_c(+|i,a) = \max_{c' \in Z} \max (1/2, p_{c'}(+|i,a) - \dis{c}{c'}/2).
\end{align*}
Essentially, as we move away from a packing point, the transition to $+$/$-$ become more uniform. We can show that:
\begin{claim}
\label{cl:val_SCMDP}
The CMDP defined above is satisfies Definition~\ref{def:smcmdp} with constant $L_p = 1$.\footnote{The reward function does not vary with context hence reward smoothness is satisfied for all $L_r\ge 0$. The proof of the claim is deferred to the appendix.} 
\end{claim}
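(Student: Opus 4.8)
The plan is to verify the transition-smoothness inequality of Definition~\ref{def:smcmdp} directly, since reward smoothness holds for free: the reward function is context-independent, so $|r^{c_1}(s,a) - r^{c_2}(s,a)| = 0 \le L_r \dis{c_1}{c_2}$ for every $L_r \ge 0$, as noted in the footnote. First I would observe that the only transitions depending on the context are those out of the bandit states $i \in [n]$; the initial-state transition (uniform over $1,\dots,n$) and the self-loops at the absorbing states $+,-$ are identical across all contexts, so their contribution to $\|p^{c_1}(\cdot|s,a) - p^{c_2}(\cdot|s,a)\|_1$ is zero and the inequality holds trivially for them. It therefore suffices to bound the $\ell_1$ distance for pairs $(i,a)$ with $i \in [n]$.

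Second, since each such transition is a two-point distribution supported on $\{+,-\}$, setting $f(c) := p_c(+|i,a)$ and using $p_c(-|i,a) = 1 - f(c)$ gives
\begin{align*}
\|p^{c_1}(\cdot|i,a) - p^{c_2}(\cdot|i,a)\|_1 = 2\,|f(c_1) - f(c_2)|,
\end{align*}
so the entire claim reduces to showing that $c \mapsto f(c)$ is $\tfrac12$-Lipschitz with respect to $\phi$.

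Third, I would argue that the defining formula
\begin{align*}
f(c) = \max\Big\{\tfrac{1}{2},\ \max_{c' \in Z}\big(p_{c'}(+|i,a) - \tfrac{1}{2}\dis{c}{c'}\big)\Big\}
\end{align*}
can be applied uniformly over all of $\C{C}$, including the packing points $Z$: for $c \in Z$ the term $c'=c$ contributes $p_c(+|i,a) \ge \tfrac12$, while every other term is at most $\tfrac12 + \epsilon' - \tfrac12 \cdot 8\epsilon' < \tfrac12$ by the $r = 8\epsilon'$ packing separation, so the formula recovers the fixed assignment $p_c(+|i,a)$ and no neighbor's cone overwrites it. With $f$ thus expressed as a single function, Lipschitzness follows from two standard facts: (i) each cone $c \mapsto p_{c'}(+|i,a) - \tfrac12\dis{c}{c'}$ is $\tfrac12$-Lipschitz, because $c \mapsto \dis{c}{c'}$ is $1$-Lipschitz by the triangle inequality ($|\dis{c_1}{c'} - \dis{c_2}{c'}| \le \dis{c_1}{c_2}$); and (ii) the pointwise maximum of a family of $\tfrac12$-Lipschitz functions (the cones together with the constant $\tfrac12$) is again $\tfrac12$-Lipschitz. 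Combining these gives $|f(c_1)-f(c_2)| \le \tfrac12 \dis{c_1}{c_2}$, hence the $\ell_1$ distance is at most $\dis{c_1}{c_2}$, so $L_p = 1$.

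The computation is entirely routine and there is no substantive obstacle; the work is in the bookkeeping. The factor $\tfrac12$ inside the construction is exactly what absorbs the factor of $2$ relating the $\ell_1$ distance of Bernoulli distributions to the gap in their success probabilities, and the packing radius $r = 8\epsilon'$ must be large enough that the $\max$ formula stays consistent at the packing points. Checking that consistency, and thereby confirming that the specific numerical constants line up, is the one place where care is needed.
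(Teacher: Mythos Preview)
Your proof is correct and takes a cleaner route than the paper's. The paper argues by contradiction: assuming $2|f(c_1)-f(c_2)| > \dis{c_1}{c_2}$, it first deduces $\dis{c_1}{c_2} < 2\epsilon'$, then locates the unique packing point $c_0 \in Z$ with $\dis{c_1}{c_0} < 2\epsilon'$, and uses the $8\epsilon'$ separation together with the triangle inequality to show that $c_0$ is the only packing point whose cone can exceed $\tfrac12$ at either $c_1$ or $c_2$; with the $\max$ reduced to the single term from $c_0$, the contradiction follows from one more application of the triangle inequality. You instead write $f$ globally as a pointwise maximum of $\tfrac12$-Lipschitz functions (the constant $\tfrac12$ and the cones $c \mapsto p_{c'}(+|i,a) - \tfrac12\dis{c}{c'}$) and invoke the standard fact that such a maximum is again $\tfrac12$-Lipschitz, avoiding both the contradiction and the localization step. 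The consistency check at packing points that you flag as the one delicate spot plays exactly the role of the paper's localization argument: both use the $8\epsilon'$ separation to ensure no neighbor's cone interferes. Your version is shorter and makes the underlying structure (a Lipschitz extension via a supremum of cones) transparent; the paper's version is more hands-on but arrives at the same constant through the same two ingredients.
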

We choose the context sequence given as input to be repetitions of an arbitrary permutation of $Z$. By construction, the learning at different points in $Z$ are independent, so the lower bound is simply the lower bound for learning a single MDP (Theorem~\ref{thm:mdp_low}) multiplied by the cardinality of $Z$ (the packing number). 
Using the well known relation that $\C{N}(\C{C},r) \leq \C{D}(\C{C},r)$, we have the desired lower bound. We refer the reader to the appendix for proof of Claim~\ref{cl:val_SCMDP} and a more detailed analysis.
\end{proof}

\section{Contextual Linear Combination of MDPs} \label{sec:linear}
From the previous section, it is clear that for a contextual MDP with just smoothness assumptions, exponential dependence on context dimension is unavoidable. Further, the computational requirements of our Cover-Rmax algorithm scales with the covering number of the context space. As such, in this section, we focus on a more structured assumption about the mapping from context space to MDPs and show that we can achieve substantially improved sample efficiency.

The specific assumption we make in this section is that the model parameters of an individual MDP $M^c$ is the linear combination of the parameters of $d$ base MDPs, i.e., 
\begin{align} \label{eq:lin_estimate}
\begin{split}
    & p^c(s'|s,a) = c^\top \begin{bmatrix} p_1(s'|s,a) \\
    \vdots\\
    p_d(s'|s,a)
    \end{bmatrix} := c^\top P(s,a,s'), \\
    & r^c(s,a) = c^\top \begin{bmatrix} r_1(s,a) \\
    \vdots\\
    r_d(s,a) 
    \end{bmatrix}  := c^\top R(s,a).
\end{split}
\end{align}
We use $P(s,a,s')$ and $R(s,a)$ as shorthand for the $d\times 1$ vectors that concatenate the parameters from different base MDPs for the same $s,a$ (and $s'$). The parameters of the base MDPs ($p_i$ and $r_i$) are unknown and need to be recovered from data by the learning agent, and the combination coefficients are directly available which is the context vector $c$ itself. This assumption can be motivated  in an application scenario where the user/patient responds according to her characteristic distribution over $d$ possible behavioural patterns.

A mathematical difficulty here is that for an arbitrary context vector $c \in \R^d$, $p^c(\cdot|\cdot,\cdot)$ is not always a valid transition function and may violate non-negativity and normalization constraints. Therefore, we require that $c \in \Delta_{d-1}$, that is, $c$ stays in the probability simplex so that $p^c(\cdot|\cdot,\cdot)$ is always valid.\footnote{$\Delta_n$  is the $n$-simplex $\{x \in \R^{n+1}: \, \sum_{i=1}^{n+1} x_i = 1, x \geq 0$\}.} 

\subsection{KWIK\_LR-Rmax}
\label{sec:kwikrmax}
We first explain how to estimate the model parameters in this linear setting, and then discuss how to perform exploration properly. 

\paragraph{Model estimation} 
Recall that in Section~\ref{sec:covrmax}, we treat the MDPs whose contexts fall in a small ball as a single MDP, and estimate its parameters using data from the \emph{local} context ball. In this section, however, we have a \emph{global} structure due to our parametric assumption ($d$ base MDPs that are shared across all contexts). This implies that data obtained at a context may be useful for learning the MDP parameters at another context that is far away, and to avoid the exponential dependence on $d$ we need to leverage this structure and generalize globally across the entire context space. 

Due to the linear combination setup, we use linear regression to replace the estimation procedure in Equation~\ref{eq:emp_freq}: 
in an episode with context $c$, when we observe the state-action pair $(s,a)$, a next-state $s_{\text{next}}$ will be drawn from $p^c(\cdot|s,a)$.\footnote{Here we use $s_{\text{next}}$ to denote the random variable, and $s'$ to denote a possible realization.} Therefore, the indicator of whether $s_{\text{next}}$ is equal to $s'$ forms an unbiased estimate of $p^c(s'|s,a)$, i.e.,
$
\B{E}_{s_{\text{next}} \sim p^c(\cdot|s,a)}\left[\mathbb{I}[s_{\text{next}}=s'] \right]= p^c(s'|s,a) = c^\top P(s,a,s').
$ 
Based on this observation, we can construct a feature-label pair 
\begin{align} \label{eq:input-output}
    (c,\, \mathbb{I}[s_{\text{next}}=s'])
\end{align}
whenever we observe a transition tuple $(s,a,s_{\text{next}})$ under context $c$, and their relationship is governed by a linear prediction rule with $P(s,a, s')$ being the coefficients. Hence, to estimate $P(s,a,s')$ from data, we can simply collect the feature-label pairs that correspond to this particular $(s,a,s')$ tuple, and run linear regression to recover the coefficients. The case for reward function is similar, hence, not discussed. 

If the data is abundant (i.e., $(s,a)$ is observed many times) and exploratory (i.e., the design matrix that consists of the $c$ vectors for $(s,a)$ is well-conditioned), we can expect to recover $P(s,a,s')$ accurately. But how to guarantee these conditions? Since the context is chosen adversarially, the design matrix can indeed be ill-conditioned.

Observe, however, when the matrix is ill-conditioned and new contexts lie in the subspace spanned by previously observed contexts, we can make accurate predictions despite the inability to recover the model parameters. An \emph{online} linear regression (LR) procedure will take care of this issue, and we choose KWIK\_LR \citep{walsh2009exploring} as such a procedure. 

The original KWIK\_LR deals with scalar labels, which can be used to decide whether the estimate of $p^c(s'|s, a)$ is sufficiently accurate (\emph{known}). A $(s,a)$ pair then becomes known if $(s,a,s')$ is known for all $s'$. This approach, however, generally leads to a loose analysis, because there is no need to predict $p^c(s'|s,a)$ for each individual $s'$ accurately: if the estimate of $p^c(\cdot |s, a)$ is close to the true distribution under $L_1$ error, the $(s,a)$ pair can already be considered as known. We extend the KWIK\_LR analysis to handle vector-valued outputs, and provide tighter error bounds by treating $p^c(\cdot|s,a)$ as a whole. Below we introduce our extended version of KWIK\_LR, and explain how to incorporate the knownness information in Rmax skeleton to perform efficient exploration.



\paragraph{Identifying known $(s,a)$ with KWIK\_LR}~\\
%
%
%
%
The KWIK\_LR-Rmax algorithm we propose for the linear setting still uses Rmax template (Algorithm~\ref{alg:Rmax}) for exploration: in every episode we build the induced MDP $M_K$, and act greedily according to its optimal policy with balanced wandering. 
The major difference from Cover-Rmax lies in how the set of known states $K$ are identified and how $M_K$ is constructed, which we explain below (see pseudocode in Algorithm~\ref{alg:KWIK}).

At a high level, the algorithm works in the following way: when constructing $M_K$, we query the KWIK procedure for estimates $\hat{p}^c(\cdot|s,a)$ and $\hat{r}^c(s,a)$ for every pair $(s,a)$ using $Predict(c,s,a)$. The KWIK procedure either returns $\perp$ (don't know), or returns estimates that are guaranteed to be accurate. 
If $\perp$ is returned, then we consider $(s,a)$ as unknown and associate $s$ with $R_{\max}$ reward for exploration. 
Such optimistic exploration ensures significant probability of observing $(s,a)$ pairs on which we have predicted $\perp$. If we do observe such pairs in an episode, we call $Update$ with feature-label pairs formed via Equation~\ref{eq:input-output} to make  progress on estimating parameters for unknown state-action pairs. 

Next we walk through the pseudocode and explain how $Predict$ and $Update$ work in detail. Then we prove an upper bound on the number of updates that can happen (i.e., the \textbf{if} condition holds on Line \ref{lin:updateQ}), which forms the basis of our analysis of KWIK\_LR-Rmax. 

In Algorithm~\ref{alg:KWIK}, we initialize matrices $Q$ and $W$ for each $(s,a)$ using $Initialize(\cdot)$ and update them over time. Let $C_t(s,a)$ be the design matrix at episode $t$, where each row is a context $c_\tau$ such that $(s,a)$ was observed in episode $\tau < t$. By matrix inverse rules, we can verify that the update rule on  Line~\ref{lin:update_Q} essentially yields $Q_t(s,a) = (I + C_t^\top C_t)^{-1}$, where $Q_t(s,a)$ is the value of $Q(s,a)$ in episode $t$. This is the inverse of the (unnormalized and regularized) empirical covariance matrix, which plays a central role in linear regression analysis. The matrix $W$ accumulates the outer product between the feature vector (context) $c$ and the one-hot vector label 
$y = (\{\B{I}[s_{\text{next}} = s']\}_{\forall s' \in \C{S}})^\top$. 
It is then obvious that $Q_t(s,a) W_t(s,a)$ is the linear regression estimate of $P(s,a)$ using the data up to episode $t$.

When a new input vector $c_t$ comes, we check whether $\|Q(s,a) c_t\|_2$ is below a predetermined threshold $\alpha_S$ (Line~\ref{lin:kwik_known}). Recall that $Q(s,a)$ is the inverse covariance matrix, so a small $\|Q(s,a) c_t\|_2$ implies that the estimate $Q_t(s,a) W_t(s,a)$ is close to $P(s,a)$ along the direction of $c_t$, so we predict $p^{c_t}(\cdot |s,a) = c_t^\top P(s,a) \approx c_t^\top Q(s,a) W(s,a)$; otherwise we return $\perp$. 
The KWIK subroutine for rewards is similar hence omitted. To ensure that the estimated transition probability is valid, we project the estimated vector onto $\Delta_{S-1}$, which can be done efficiently using existing techniques \citep{duchi2008efficient}. 

Below we state the KWIK bound for learning the transition function; the KWIK bound for learning rewards is much smaller hence omitted here. 
We use the KWIK bound for scalar linear regression from \cite{walsh2009exploring} and the property of multinomial samples to get our KWIK bound. 
\begin{theorem}[KWIK\_LR bound for learning multinomial vectors]
\label{thm:kwiklr}
For any $\epsilon > 0$ and $\delta > 0$, if the KWIK\_LR algorithm is executed for probability vectors $p_t(\cdot|s,a)$, with $\alpha_S = \min \{b_1\frac{\epsilon^2}{d^{3/2}}, b_2\tfrac{ \epsilon^2}{\sqrt{d}\log (d 2^S /\delta)}, \tfrac{\epsilon}{2\sqrt{d}}\}$ with suitable constants $b_1$ and $b_2$, then the number of $\perp$'s where updates take place (see Line~\ref{lin:updateQ}) will be bounded by $\C{O}(\tfrac{d^2}{\epsilon^4} \max \{d^2, S^2 \log^2 (d/\delta')\})$, and, with probability at least $1-\delta$, $\forall c_t$ where a non-``$\perp$'' prediction is returned, $\|\hat{p}^{c_t}_t(\cdot|s,a) - p^{c_t}_t(\cdot|s,a)\|_1 \leq \epsilon$.
\end{theorem}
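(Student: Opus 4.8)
The plan is to reduce the vector-valued (multinomial) regression problem to a family of scalar linear regression problems, to which the scalar KWIK\_LR guarantee of \cite{walsh2009exploring} applies directly, and then to control the number of $\perp$'s separately through a determinant/potential argument on the design matrix. The two halves of the statement (the accuracy guarantee and the update count) are essentially decoupled, and I would prove them in that order.

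For the accuracy half, I would exploit the identity $\|\hat p^{c}(\cdot|s,a) - p^{c}(\cdot|s,a)\|_1 = \max_{\sigma \in \{-1,+1\}^{S}} \sigma^\top(\hat p^{c} - p^{c})$, which converts the desired $L_1$ bound into a uniform bound over the $2^S$ sign patterns. For a fixed $\sigma$, the aggregated target $\sigma^\top p^{c}(\cdot|s,a) = c^\top w_\sigma$ with $w_\sigma := \sum_{s'} \sigma_{s'} P(s,a,s')$ is again linear in the context, and the aggregated label $\sigma^\top y = \sigma_{s_{\text{next}}} \in \{-1,+1\}$ is a bounded, unbiased sample of $c^\top w_\sigma$. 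The point is that all $2^S$ scalar problems share the same covariates $c_\tau$, hence the same matrices $Q(s,a), W(s,a)$ built from the feature-label pairs of Equation~\ref{eq:input-output}, so the single test $\|Q(s,a)c_t\|_2 \le \alpha_S$ simultaneously certifies every aggregated prediction $\sigma^\top \hat p^{c_t} = c_t^\top Q(s,a) W(s,a) \sigma$.

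Next I would invoke the scalar KWIK\_LR error decomposition for each $\sigma$: a deterministic regularization bias bounded by $\|Q(s,a)c_t\|_2 \, \|w_\sigma\|_2$, and a stochastic noise term that an Azuma--Hoeffding bound controls by $\C{O}(\sqrt{c_t^\top Q(s,a) c_t}\,\sqrt{\log(1/\delta')}) = \C{O}(\sqrt{\alpha_S \log(1/\delta')})$ once $\|Q(s,a)c_t\|_2\le\alpha_S$. The key structural observation is that since every base transition vector is a distribution, $|(w_\sigma)_j| = |\sum_{s'}\sigma_{s'} p_j(s'|s,a)| \le 1$, so $\|w_\sigma\|_2 \le \sqrt d$ uniformly in $\sigma$; this is exactly where the $\sqrt d$ factors in $\alpha_S$ originate. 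Taking the third component $\alpha_S \le \epsilon/(2\sqrt d)$ drives the bias below $\epsilon/2$, while the second component $\alpha_S \le b_2 \epsilon^2/(\sqrt d \log(d2^S/\delta))$ drives the noise term below $\epsilon/2$ after a union bound over the $2^S$ sign patterns (whence the $2^S$ inside the logarithm) and over all episodes and $(s,a)$ pairs, with the per-event budget $\delta'$ obtained by dividing $\delta$ across this union. Summing failure probabilities to $\delta$ then yields $\|\hat p^{c_t} - p^{c_t}\|_1 \le \epsilon$ at every non-$\perp$ prediction; projecting the raw estimate onto $\Delta_{S-1}$ via \cite{duchi2008efficient} only needs to be checked not to inflate this error by more than a constant factor.

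For the update count, I would use the standard potential argument: an update occurs only when $\|Q(s,a)c_t\|_2 > \alpha_S$, and by the matrix determinant lemma this multiplies $\det(Q(s,a)^{-1}) = \det(I + C^\top C)$ by $1 + c_t^\top Q(s,a) c_t \ge 1 + \|Q(s,a)c_t\|_2^2 > 1 + \alpha_S^2$. Since $\|c\|_2\le 1$, after $T$ updates $\det(I+C^\top C) \le (1 + T/d)^d$, so $(1+\alpha_S^2)^{T} \le (1+T/d)^d$, giving $T = \tilde{\C{O}}(d/\alpha_S^2)$; substituting the three candidate values of $\alpha_S$ produces the $\max\{d^2, S^2\log^2(d/\delta')\}$ term and the stated $\C{O}\big(\frac{d^2}{\epsilon^4}\max\{d^2, S^2\log^2(d/\delta')\}\big)$ bound. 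I expect the accuracy step to be the main obstacle: the delicate part is propagating the $2^S$-fold union bound together with the $\|w_\sigma\|_2 \le \sqrt d$ estimate so that they account precisely for the three terms of $\alpha_S$, while verifying that one shared threshold on $\|Q(s,a)c_t\|_2$ legitimately validates all $2^S$ aggregated predictions at once and that the simplex projection does not degrade the $L_1$ guarantee.
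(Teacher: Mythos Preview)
Your approach is essentially the same as the paper's. Both proofs hinge on the identity $\|v\|_1 = \sup_{f\in\{-1,1\}^S} v^\top f$, the observation that the $2^S$ scalar problems $(c, yf)$ all share the same design matrix $Q(s,a)$ so a single threshold test certifies them simultaneously, the bound $\|P(s,a)f\|_2\le\sqrt d$, and a union bound over the $2^S$ patterns with $\delta' = \delta/2^S$.

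The only presentational difference is that the paper invokes the scalar KWIK\_LR guarantee of \cite{walsh2009exploring} as a black box (their Theorem gives both the $\epsilon$-accuracy and the $\perp$-count at once, parametrized by $M=\sqrt d$ and $\delta'=\delta/2^S$), whereas you unpack that black box: you redo the bias--noise decomposition for the accuracy half and redo the determinant/potential argument for the $\perp$-count half. That is fine and arguably more self-contained, but it is the same proof underneath. Two small remarks: the simplex projection step is not part of this theorem in the paper (it is deferred to the proof of Theorem~\ref{thm:kwikpac}, where the factor-of-two blowup is absorbed), and the union bound over $(s,a)$ pairs likewise belongs to Theorem~\ref{thm:kwikpac}, not here; the present theorem is stated for a fixed $(s,a)$.
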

\begin{proof}[Proof sketch]
(See full proof in the appendix.) We provide a direct reduction to KWIK bound for learning scalar values. The key idea is to notice that for any vector $v\in \R^{S}$: 
$$
\|v\|_1 = \sup_{f\in \{-1, 1\}^{S}} v^\top f.
$$
So conceptually we can view Algorithm~\ref{alg:KWIK} as running $2^{S}$ scalar linear regression simultaneously, each of which projects the vector label to a scalar by a fixed linear transformation $f$. We require every scalar regressor to have $(\epsilon, \delta/2^{S})$ KWIK guarantee, and the $\ell_1$ error guarantee for the vector label follows from union bound.
\end{proof}

\begin{small}
\IncMargin{2em}
\begin{algorithm}
    \SetAlgoNoLine
\Fn{Initialize($S, d, \alpha_S$)}{
    $Q(s,a) \leftarrow I_d$ for all $(s,a)$\;
    $W(s,a) \leftarrow \{0\}^{d \times S}$ for all $(s,a)$\;
}
\Fn{Predict($c,s,a$)}{
        \eIf{$\|Q(s,a)c\|_1 \le \alpha_S $ \label{lin:kwik_known} }{
                \Return $\hat{p}^{c}(\cdot|s, a) = c^\top Q(s,a) W(s,a)$\;
        }{
            \Return $\hat{p}^{c}(\cdot|s, a) = \perp$\;
        }
}
\Fn{Update($c, s, a, s_{\text{next}}$)}{
    \label{lin:update_Q}
    \If{$\|Q(s,a)c\|_1 > \alpha_S $ (``$\perp$'' prediction)}{ \label{lin:updateQ}
        $Q(s,a) \leftarrow Q(s,a) - \tfrac{(Q(s,a)c)(Q(s,a)c)^\top}{1+c^\top Q(s,a)c}$\label{lin:Q_upd}\;
        $y \leftarrow (\{\B{I}[s_{\text{next}} = s']\}_{\forall s' \in \C{S}})^\top$\;
        $W(s,a) \leftarrow W(s,a) + c y$\; \label{lin:updateW}
    }
}
    \caption{KWIK learning of $p^c(\cdot|s,a)$} 
    \label{alg:KWIK}
\end{algorithm}
\DecMargin{2em}
\end{small}

With this result, we are ready to prove the formal PAC guarantee for KWIK\_LR-Rmax.
\begin{theorem}[PAC bound for KWIK\_LR-Rmax]
\label{thm:kwikpac}
For any input values $0<\epsilon, \delta \leq 1$ and a linear CMDP model with $d$ number of base MDPs, with probability $1-\delta$, the KWIK\_LR-Rmax algorithm, produces a sequence of policies $\{\pi_t\}$ which yield at most
\begin{align*}
    \C{O}\Big(  \frac{d^2H^4SA}{\epsilon^5} \log \tfrac{1}{\delta} \max\{d^2,S^2 \log^2(\tfrac{dSA}{\delta})\}\Big)
\end{align*}
non-$\epsilon$-optimal episodes.
\end{theorem}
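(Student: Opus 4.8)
The plan is to marry the explore-or-exploit skeleton underlying Lemma~\ref{lem:com} with the KWIK update budget of Theorem~\ref{thm:kwiklr}. The only structural change from the finite Cover-Rmax analysis is that the ``knownness'' of a pair $(s,a)$ is now context-dependent: it is decided online by the test $\|Q(s,a)c_t\|_1 \le \alpha_S$ rather than by a fixed visit count, so the quantity $mSA$ that measures the total learning effort in Lemma~\ref{lem:com} must be replaced by the total number of KWIK updates. I would first fix the target KWIK accuracy: to make the induced MDP $\hat M_K$ good enough for the simulation lemma on the (normalized) value of Eq.~(\ref{eq:value}), I set the transition accuracy to $\epsilon' = \epsilon/(4H)$ and the reward accuracy to $\epsilon/4$, exactly as in the proof of Theorem~\ref{thm:pac}. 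By Theorem~\ref{thm:kwiklr}, invoked with this $\epsilon'$ and per-pair failure probability $\delta' = \delta/(3SA)$, every non-$\perp$ prediction $\hat p^{c_t}(\cdot|s,a)$ returned by $Predict$ (already projected onto $\Delta_{S-1}$) is $\ell_1$-accurate to $\epsilon/(4H)$ with probability $1-\delta/3$, and the number of updates on each $(s,a)$ is at most $\mathcal{O}\big(\tfrac{d^2 H^4}{\epsilon^4}\max\{d^2, S^2\log^2(\tfrac{dSA}{\delta})\}\big)$.

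Then come the three standard Rmax steps, re-derived for the contextual induced MDP. (i) Accuracy and optimism: on an episode with context $c_t$, the pairs for which $Predict$ returns a value are estimated to $\ell_1$ error $\epsilon/(4H)$, so on its known part $\hat M_K$ agrees with $M^{c_t}$ up to a value gap of $\epsilon/4$ by the simulation lemma; assigning reward $R_{\max}=1$ to the unknown pairs makes $\hat M_K$ optimistic, giving $V^*_{\hat M_K} \ge V^*_{M^{c_t}} - \epsilon/4$. (ii) Explore-or-exploit: writing $\pi_t = \pi^*_{\hat M_K}$, the difference $V^{\pi_t}_{\hat M_K} - V^{\pi_t}_{M^{c_t}}$ is controlled by the known-state estimation error plus the probability $q_t$ that the trajectory escapes to an unknown pair, so that $V^*_{M^{c_t}} - V^{\pi_t}_{M^{c_t}} \le \epsilon/2 + q_t$; hence any non-$\epsilon$-optimal (bad) episode must have escape probability $q_t \ge \epsilon/2$. (iii) Counting: each actual escape triggers the \textbf{if} on Line~\ref{lin:updateQ} and therefore at least one KWIK update, and summing the per-pair update bound over all $SA$ pairs caps the total number of updates at $K_{\text{tot}} = \mathcal{O}\big(\tfrac{d^2 H^4 SA}{\epsilon^4}\max\{d^2,S^2\log^2(\tfrac{dSA}{\delta})\}\big)$.

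To finish, I would run the same martingale argument as in Lemma~\ref{lem:com}: among the bad episodes each escapes with conditional probability at least $\epsilon/2$, each escape consumes one of the at most $K_{\text{tot}}$ updates, so a Hoeffding bound on the adapted sequence of escape indicators shows that with probability $1-\delta/3$ the number of bad episodes is $\mathcal{O}\big(\tfrac{K_{\text{tot}}}{\epsilon}\log\tfrac1\delta\big)$. Substituting $K_{\text{tot}}$ and taking a union bound over the KWIK-accuracy failure, the per-pair KWIK failures, and the concentration failure (total $\delta$) yields exactly $\mathcal{O}\big(\tfrac{d^2 H^4 SA}{\epsilon^5}\log\tfrac1\delta\,\max\{d^2,S^2\log^2(\tfrac{dSA}{\delta})\}\big)$ bad episodes, as claimed; the reward KWIK budget is dominated by the transition budget and only affects constants.

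The step I expect to be the main obstacle is precisely the one that departs from the finite-MDP template: because knownness is not monotone across the adversarially chosen contexts (a pair can be known for $c_t$ yet unknown for a later $c_{t'}$), I cannot bound the learning effort pair-by-pair through visit counts as in Lemma~\ref{lem:com}. The whole argument hinges on Theorem~\ref{thm:kwiklr} supplying a finite global cap on the number of updates despite this non-monotonicity, and on verifying that the escape-to-update linkage and the explore-or-exploit decomposition remain valid when the induced MDP $\hat M_K$ is rebuilt, with a context-dependent known set, at every episode. Getting the normalization of $H$ consistent between the simulation lemma and the KWIK accuracy $\epsilon'=\epsilon/(4H)$ (which is what turns the $1/\epsilon'^4$ in the KWIK bound into the $H^4$ of the final rate) is the other place where care is needed.
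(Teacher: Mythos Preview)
Your proposal is correct and follows essentially the same route as the paper: invoke Theorem~\ref{thm:kwiklr} with accuracy $\Theta(\epsilon/H)$ and per-pair failure $\delta/\Theta(SA)$ to bound the number of KWIK updates $m$ per $(s,a)$, then feed this $m$ into the Lemma~\ref{lem:com} explore-or-exploit skeleton (with ``KWIK updates'' replacing ``visit counts'') to obtain $\mathcal{O}(mSA\epsilon^{-1}\log(1/\delta))$ bad episodes. The only slip is that the paper sets the pre-projection KWIK accuracy to $\epsilon/(8H)$, not $\epsilon/(4H)$, so that the factor-of-$2$ blowup from projecting onto $\Delta_{S-1}$ still lands at $\epsilon/(4H)$; this is absorbed in the big-$\mathcal{O}$ and does not affect the argument.
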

\begin{proof}
When the KWIK subroutine (Algorithm~\ref{alg:KWIK}) makes non-``$\perp$'' predictions $\hat{p}^c(s,a,s')$, we require that
\begin{align*}
    \|\hat{p}^c(\cdot|s,a) - p^c(\cdot|s,a)\|_1 \leq \epsilon/8H.
\end{align*}
After projection onto $\Delta_{S-1}$, we have:
\begin{align*}
    \|\Pi_{\Delta_{S-1}}(\hat{p}^c(s,a)) - p^c(\cdot|s,a)\|_1 \leq  2\|\hat{p}^c(\cdot|s,a) - p^c(\cdot|s,a)\|_1
    \leq \epsilon/4H.
\end{align*}
Further, the update to the matrices $Q$ and $W$ happen only when an unknown state action pair $(s,a)$ is visited and the KWIK subroutine still predicts $\perp$ (Line~\ref{lin:update_Q}). The KWIK bound states that after a fixed number of updates to an unknown $(s,a)$ pair, the parameters will always be known with desired accuracy. The number of updates $m$ can be obtained by setting the desired accuracy in transitions to $\epsilon/8H$ and failure probability as $\delta/SA$ in Theorem \ref{thm:kwiklr}:
\begin{align*}
m = \C{O}\big(  \tfrac{d^2H^4}{\epsilon^4} \max\{d^2,S^2 \log^2(\tfrac{dSA}{\delta})\}\big)
\end{align*}
We now use Lemma~\ref{lem:com} where instead of updating counts for number of visits, we look at the number of updates for unknown $(s,a)$ pairs. On applying a union bound over all state action pairs and using Lemma~\ref{lem:com}, it is easy to see that the sub-optimal episodes are bounded by $\C{O}\Big(\frac{mSA}{\epsilon} \ln \frac{1}{\delta}\Big)$ 
with probability at least $1-\delta$. The bound in Theorem~\ref{thm:kwikpac} is obtained by substituting the value of $m$.
\end{proof}
We see that for this contextual MDP, the linear structure helps us in avoiding the exponential dependence in context dimension $d$. The combined dependence on $S$ and $d$ is now $O(\max \{d^4S, d^2S^3\})$.

\section{Related work} \label{sec:related}
\paragraph{Transfer in RL with latent contexts}
The general definition of CMDPs  captures the problem of transfer in RL and multi-task RL. See \cite{taylor2009transfer} and \cite{lazaric2011transfer} for surveys of empirical results. 
Recent papers have also advanced the theoretical understanding of transfer in RL. For instance, \cite{brunskill2013sample} and \cite{hallak2015contextual} analyzed the sample complexity of CMDPs 
where each MDP is an element of a finite and small set of MDPs, and the MDP label is treated as the \emph{latent} (i.e., unseen) context. 
\cite{mahmud2013clustering} consider the problem of 
transferring the optimal policies of a large set of known MDPs to a new MDP. 
The commonality of the above papers is that the MDP label (i.e., the context) is not observed. Hence, their methods have to initially explore in every new MDP to identify its label, which requires the episode length to be substantially longer than the planning horizon. This can be a problematic assumption in our motivating scenarios, where we interact with a patient / user / student for a limited period of time and the data in a single episode (whose length $H$ \emph{is} the planning horizon) is not enough for identifying the underlying MDP.  
In contrast to prior work, we propose to leverage observable context information to perform more direct transfer from previous MDPs, and our algorithm works with arbitrary episode length $H$.

\paragraph{RL with side information} Our work leverages the available side-information for each MDP, which is inspired by the use of contexts in contextual bandits \citep{langford2008epoch, li2010contextual}. 
The use of such side information can also be found in RL literature: \cite{ammar2014online} developed a multi-task policy gradient method where the context is used for transferring knowledge between tasks; \cite{killian2016transfer} used parametric forms of MDPs to develop models for personalized medicine policies for HIV treatment.

\paragraph{RL in metric space} For smooth CMDPs (Section~\ref{sec:covrmax}), we pool observations across similar contexts and reduce the problem to learning policies for finitely many MDPs. An alternative approach is to consider an infinite MDP whose state representation is augmented by the context, and apply PAC-MDP methods for metric state spaces (e.g., C-PACE proposed by \cite{pazis2013pac}).
However, doing so might increase the sample and computational complexity unnecessarily, because we no longer leverage the structure that 
a particular component of the (augmented) state, namely the context, remains the same in an episode. Concretely, the augmenting approach needs to perform planning in the augmented MDP over states and contexts, which makes its computational/storage requirement worse than our 
solution: we only perform planning in MDPs defined on $\C{S}$, whose computational characteristics have no dependence on the context space. 
In addition, we allow the context sequence to be chosen in an adversarial manner. This corresponds to adversarially chosen initial states in MDPs, which is usually not handled by PAC-MDP methods.

\paragraph{KWIK learning of linear hypothesis classes} Our linear combination setting (Section~\ref{sec:linear}) provides an instance where parametric assumptions can lead to substantially improved PAC bounds. We build upon the KWIK-Rmax learning framework developed in previous work \citep{li2008knows, szita2011agnostic} and use KWIK linear regression as a sub-routine. 
For the resulting KWIK\_LR-Rmax algorithm, its sample complexity bound inherently depends on the KWIK bound for linear regression. It is well known that even for linear hypothesis classes, the KWIK bound is exponential in input dimension in the agnostic case \citep{szita2011agnostic}. Therefore, the success of the algorithm relies on the validity of the modelling assumption.

\citet{abbasi2014online} studied a problem similar to our linear combination setting, and proposed a no-regret algorithm by combining UCRL2 \citep{jaksch2010near} with confidence set techniques from stochastic linear optimization literature \citep{dani2008stochastic,filippi2010parametric}. Our work takes an independent and very different approach, and we provide a PAC guarantee which is not directly comparable to regret bound. Still, we observe that our dependence on $A$ is optimal for PAC whereas theirs is not ($\sqrt{A}$ is optimal for bandit regret analysis and they have $A$); on the other hand, their dependence on $T$ (the number of rounds) is optimal, and our dependence on $1/\epsilon$, its counterpart in PAC analysis, is suboptimal. It is an interesting future direction to combine the algorithmic ideas from both papers to improve the guarantees.

\section{Conclusion}
In this paper, we present a general setting of using side information for learning near-optimal policies in a large and potentially infinite number of MDPs. The proposed Cover-Rmax algorithm is a model-based PAC-exploration algorithm for the case where MDPs vary smoothly with respect to the observed side information. Our lower bound construction indicates the necessary exponential dependence of any PAC algorithm on the context dimension in a smooth CMDP. We also consider another instance with a parametric assumption, and using a KWIK linear regression procedure, present the KWIK\_LR-Rmax algorithm for efficient exploration in linear combination of MDPs. Our PAC analysis shows a significant improvement with this structural assumption.

The use of context based modelling of multiple tasks has rich application possibilities in personalized recommendations, healthcare treatment policies, and tutoring systems. We believe that our setting can possibly be extended to cover the large space of multi-task RL quite well with finite/infinite number of MDPs, observed/latent contexts, and deterministic/noisy mapping between context and environment. We hope our work spurs further research along these directions.

\begin{acks}
This work was supported in part by a grant from the Open Philanthropy Project to the Center for Human-Compatible AI, and in part by NSF Grant IIS 1319365. Ambuj Tewari acknowledges the support from NSF grant CAREER IIS-1452099 and Sloan Research Fellowship. Any opinions, findings, conclusions, or recommendations expressed here are those of the authors and do not necessarily reflect the views of the sponsors. 
\end{acks}

\bibliographystyle{alpha}
\bibliography{ALT_2018}


\newpage
\appendix
\section{Proofs from Section~\ref{sec:covrmax}}
\subsection{Proof of Lemma~\ref{lem:com}}
We adapt the analysis in \cite{kakade2003sample} for the episodic case which results in the removal of a factor of $H$, since complete episodes are counted as mistakes and we do not count every sub-optimal action in an episode. We reproduce the detailed analysis here for completion.
For completing the proof of Lemma~\ref{lem:com}, firstly, we will look at a version of simulation lemma from \cite{kearns2002near}. Also, for the complete analysis we will assume that the rewards lie between 0 and 1.

\begin{definition}[Induced MDP] 
\label{def:ind}
Let $M$ be an MDP with $K \subseteq \C{S}$ being a subset of states. Given, such a set $K$, we define an induced MDP $M_K$ in the following manner. For each $s \in K$, define the values
\begin{align*}
    p_{M_K}(s'|s,a) = p_M(s'|s,a)\\
    r_{M_K} (s,a) = r_M(s,a)
\end{align*}
For all $s \notin K$, define $p_{M_K}(s'|s,a) = \B{I}\{s' = s\}$ and $r_{M_K} (s,a) = 1$.
\end{definition}

\begin{lemma}[Simulation lemma for episodic MDPs]
\label{lem:sim}
Let $M$ and $M'$ be two MDPs with the same state-action space. If the transition dynamics and the reward functions of the two MDPs are such that 
\begin{align*}
\|p_M(\cdot |s,a) - p_{M'}(\cdot |s,a)\|_1 \leq \epsilon_1 \, \, \forall s \in \C{S}, a \in \C{A}
\end{align*}
\begin{align*}
|r_M(s,a) - r_{M'}(s,a)| \leq \epsilon_2 \, \, \forall s \in \C{S}, a \in \C{A}
\end{align*}
then, for every (non-stationary) policy $\pi$ the two MDPs satisfy this property:
\begin{align*}
    |V^{\pi}_M - V^{\pi}_{M'}| \leq \epsilon_2 + H\epsilon_1
\end{align*}
\end{lemma}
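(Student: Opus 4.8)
The plan is to prove the bound by backward induction on the horizon, comparing the two MDPs one step at a time via a standard add-and-subtract (telescoping) argument. First I would introduce the \emph{unnormalized} value-to-go functions: for each timestep $h \in \{0,\dots,H\}$ and each state $s$, set $U^\pi_{M,h}(s) = \B{E}_{M,\pi}\big[\sum_{h'=h}^{H-1} r(s_{h'},\pi_{h'}(s_{h'})) \mid s_h = s\big]$, with $U^\pi_{M,H}(\cdot)=0$, and define $U^\pi_{M',h}$ analogously. Since rewards lie in $[0,1]$ and at most $H$ stages remain, every such $U$ takes values in $[0,H]$. By the definition of value in Eq.~(\ref{eq:value}) we have $V^\pi_M = \tfrac1H\,\B{E}_{s_0\sim\mu}[U^\pi_{M,0}(s_0)]$, so it suffices to bound $\sup_s |U^\pi_{M,0}(s) - U^\pi_{M',0}(s)|$ by $H\epsilon_2 + H^2\epsilon_1$ and then divide by the normalization factor $H$.

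Next I would write the one-step Bellman recursion. Fixing $a = \pi_h(s)$ (the same action in both MDPs, since $\pi$ is held fixed), we have $U^\pi_{M,h}(s) = r_M(s,a) + \sum_{s'} p_M(s'|s,a)\,U^\pi_{M,h+1}(s')$ and likewise for $M'$. Let $\Delta_h = \sup_s |U^\pi_{M,h}(s) - U^\pi_{M',h}(s)|$. Subtracting the two recursions and inserting $\pm\sum_{s'} p_M(s'|s,a)\,U^\pi_{M',h+1}(s')$ splits the difference into three pieces: the reward gap $r_M(s,a)-r_{M'}(s,a)$, bounded by $\epsilon_2$; the transition-weighted propagation $\sum_{s'} p_M(s'|s,a)\big[U^\pi_{M,h+1}(s') - U^\pi_{M',h+1}(s')\big]$, bounded by $\Delta_{h+1}$ because $p_M(\cdot|s,a)$ is a probability distribution; and the transition-mismatch term $\sum_{s'}\big(p_M - p_{M'}\big)(s'|s,a)\,U^\pi_{M',h+1}(s')$, which by H\"older's inequality is at most $\|p_M(\cdot|s,a)-p_{M'}(\cdot|s,a)\|_1\cdot\|U^\pi_{M',h+1}\|_\infty \le \epsilon_1 H$.

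This yields the recursion $\Delta_h \le \Delta_{h+1} + \epsilon_2 + H\epsilon_1$ with base case $\Delta_H = 0$. Unrolling over the $H$ stages gives $\Delta_0 \le H\epsilon_2 + H^2\epsilon_1$, and dividing by $H$ (after taking the expectation over $s_0\sim\mu$) gives $|V^\pi_M - V^\pi_{M'}| \le \epsilon_2 + H\epsilon_1$, as claimed.

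The computation is entirely routine; the only things to get right are the bookkeeping around the $1/H$ normalization and the crude estimate $\|U^\pi_{M',h+1}\|_\infty \le H$, which is precisely what converts the $\ell_1$ transition error into the $H\epsilon_1$ term. I expect the mild obstacle to be ensuring the add-and-subtract step isolates the mismatch term cleanly so that H\"older applies, together with the observation that fixing $\pi$ lets us use the \emph{same} action in both MDPs at each step — this is what keeps the propagation term a clean contraction-free recursion rather than introducing an extra maximization. A sharper induction using $\|U^\pi_{M',h+1}\|_\infty \le H-h-1$ would improve the transition term to roughly $\tfrac{H}{2}\epsilon_1$, but the stated bound already follows from the crude estimate and is all that is needed for Lemma~\ref{lem:com}.
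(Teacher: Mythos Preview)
Your proof is correct. The backward-induction argument on the unnormalized value-to-go functions is clean and self-contained: the add-and-subtract step isolates the three terms exactly as you describe, the recursion $\Delta_h \le \Delta_{h+1} + \epsilon_2 + H\epsilon_1$ unrolls to $H\epsilon_2 + H^2\epsilon_1$, and the $1/H$ normalization in Eq.~(\ref{eq:value}) finishes it.

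The paper takes a different route. Rather than inducting on the Bellman recursion, it writes $V^\pi_M = \sum_{\tau} P^\pi_M(\tau)\,U_M(\tau)$ as a sum over length-$H$ trajectories, where $U_M(\tau)$ is the normalized reward along $\tau$ (so $|U_M(\tau)|\le 1$). An add-and-subtract at the trajectory level then splits the difference into a reward-gap term, bounded by $\epsilon_2$ since $\sum_\tau P^\pi_M(\tau)=1$, and a trajectory-distribution term $\sum_\tau U_{M'}(\tau)\big(P^\pi_M(\tau)-P^\pi_{M'}(\tau)\big)$, which is bounded by the total variation $\sum_\tau |P^\pi_M(\tau)-P^\pi_{M'}(\tau)|$. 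The paper then invokes Lemma~8.5.4 of \cite{kakade2003sample} to bound this total variation by $H\epsilon_1$. So the paper pushes the inductive work into an external lemma about trajectory distributions, whereas you carry it out directly at the value-function level. Your approach is more elementary and avoids the external reference; the paper's approach makes the two error sources (reward vs.\ transitions) separate at the top level, at the cost of deferring the transition bound. Either is fine for the purposes of Lemma~\ref{lem:com}.
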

\begin{proof}
Consider $\C{T}_h$ to be the set of all trajectories of length $h$ and let $P^{\pi}_M(\tau)$ denote the probability of observing trajectory $\tau$ in $M$ with the behaviour policy $\pi$. Further, let $U_M(\tau)$ the expected average reward obtained for trajectory $\tau$ in MDP $M$.
\begin{eqnarray*}
|V^{\pi}_M - V^{\pi}_{M'}| &=& |\sum_{\tau \in \C{T}_H}\, [P^{\pi}_M (\tau) U_M (\tau) - P^{\pi}_{M'} (\tau) U_{M'} (\tau)]\,| \\
&\leq& |\sum_{\tau \in \C{T}_H} \,[P^{\pi}_M (\tau) U_M (\tau) - P^{\pi}_M (\tau) U_{M'} (\tau) + P^{\pi}_M (\tau) U_{M'} (\tau) - P^{\pi}_{M'} (\tau) U_{M'} (\tau)]\,|\\
&\leq& |\sum_{\tau \in \C{T}_H}\, [P^{\pi}_M (\tau) (U_{M} (\tau) - U_{M'} (\tau))]\,| + |\sum_{\tau \in \C{T}_H} \,[U_{M'} (\tau) ( P^{\pi}_{M} (\tau) - P^{\pi}_{M'} (\tau))] \,| \\ 
&\leq& |\sum_{\tau \in \C{T}_H} P^{\pi}_M (\tau)| \epsilon_2 + |\sum_{\tau \in \C{T}_H}\, [ P^{\pi}_{M} (\tau) - P^{\pi}_{M'} (\tau)] \,|\\
&\leq& \epsilon_2 + |\sum_{\tau \in \C{T}_H}\, [ P^{\pi}_{M} (\tau) - P^{\pi}_{M'} (\tau)] \,|
\end{eqnarray*}
The bound for the second term follows from the proof of lemma 8.5.4 in \cite{kakade2003sample}. Combining the two expressions, we get the desired result.
\end{proof}

\begin{lemma}[Induced inequalities]
\label{lem:indineq}
Let $M$ be an MDP with $K$ being the set of known states. Let $M_K$ be the induced MDP as defined in \ref{def:ind} with respect to $K$ and $M$. We will show that for any (non-stationary) policy $\pi$, all states $s \in \C{S}$,
\begin{align*}
    V^{\pi}_{M_K}(s) \geq V^{\pi}_{M}(s)
\end{align*}
and
\begin{align*}
    V^{\pi}_{M} (s) \geq V^{\pi}_{M_K}(s) - \text{P}^{\pi}_M[\text{Escape to an unknown state}|s_0 = s]
\end{align*}
where $V^{\pi}_{M} (s)$ denotes the value of policy $\pi$ in MDP $M$ when starting from state $s$.
\end{lemma}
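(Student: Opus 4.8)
The plan is to exploit the single structural fact that defines $M_K$: on the known set $K$ its transitions and rewards agree \emph{exactly} with those of $M$, while on $\C{S}\setminus K$ every state is made absorbing and pays the maximal per-step reward. Consequently the two MDPs are statistically indistinguishable until the trajectory first reaches an unknown state. I would therefore introduce the \emph{escape} event $W=$ ``the trajectory generated by $\pi$ starting from $s$ visits some state outside $K$ within the $H$ steps,'' and couple the two processes so that, up to the first escape time, $M$ and $M_K$ produce identically distributed state--reward sequences. All of the accounting then reduces to comparing what happens \emph{after} escape.

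For the first (optimism) inequality I would split the normalized return $G$ (recall $G\in[0,1]$ since each of the $H$ rewards is scaled by $1/H$) according to $W$. On $W^c$ the whole trajectory stays in $K$, so the returns in $M$ and $M_K$ coincide. On $W$ the pre-escape rewards again coincide, but every post-escape step contributes the maximal reward $1$ in $M_K$ versus a reward in $[0,1]$ in $M$; hence the $M_K$-return dominates the $M$-return sample-path-wise. Taking expectations over both cases yields $V^\pi_{M_K}(s)\ge V^\pi_M(s)$.

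For the second inequality I would write $V^\pi_{M_K}(s)-V^\pi_M(s) = \big(\B{E}^\pi_{M_K}[\B{I}[W^c]G]-\B{E}^\pi_M[\B{I}[W^c]G]\big) + \big(\B{E}^\pi_{M_K}[\B{I}[W]G]-\B{E}^\pi_M[\B{I}[W]G]\big)$. The first bracket vanishes by the coupling. For the second, I would bound $\B{E}^\pi_{M_K}[\B{I}[W]G]\le \mathrm{P}^\pi_{M_K}[W]$ (using $G\le 1$) and $\B{E}^\pi_M[\B{I}[W]G]\ge 0$ (using $G\ge 0$), and observe that $W$ is measurable with respect to the trajectory up to the first unknown state, where the two dynamics agree, so $\mathrm{P}^\pi_{M_K}[W]=\mathrm{P}^\pi_M[W]$. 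This gives $V^\pi_{M_K}(s)-V^\pi_M(s)\le \mathrm{P}^\pi_M[W]$, which is exactly the desired bound after rearranging.

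The main obstacle is making the coupling fully rigorous, i.e.\ proving that the law of the trajectory truncated at the first escape is genuinely identical under $M$ and $M_K$, so that both the cancellation of the $W^c$ terms and the identity $\mathrm{P}^\pi_{M_K}[W]=\mathrm{P}^\pi_M[W]$ are justified rather than merely intuitive. To sidestep this bookkeeping I would, if needed, replace the trajectory argument by a backward induction on the timestep $h=H,H-1,\dots,0$ applied to both inequalities simultaneously, carrying the escape probability as an explicit inductive term. The inductive step at a known state $s\in K$ is the crux: there the per-step rewards match and one checks that $\sum_{s'}p(s'|s,\pi_h(s))\,\mathrm{P}^\pi_{M,h+1}[W\mid s']$ telescopes into $\mathrm{P}^\pi_{M,h}[W\mid s]$, while the unknown-state case $s\notin K$ is immediate since escape has already occurred (probability $1$) and the induced value $\tfrac{H-h}{H}$ is at most $1$. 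Throughout, the one quantity to track carefully is the $1/H$ normalization, which is what keeps every normalized return in $[0,1]$ and hence the post-escape discrepancy bounded by the escape probability.
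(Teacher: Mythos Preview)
Your proposal is correct. The paper does not supply its own argument here: its ``proof'' is the one-line citation ``See Lemma~8.4.4 from \cite{kakade2003sample},'' so there is nothing substantive to compare against in the paper itself. Your coupling/escape-time decomposition (with the backward-induction fallback) is precisely the standard way this lemma is established and matches the argument in Kakade's thesis that the paper defers to; the only cosmetic point is to be explicit that the symbol $G$ in $\B{E}^\pi_{M_K}[\cdot]$ and $\B{E}^\pi_{M}[\cdot]$ denotes the return in the respective MDP, which you already handle correctly via the coupling.
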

\begin{proof}
See Lemma 8.4.4 from \cite{kakade2003sample}.
\end{proof}

\begin{corollary}[Implicit Explore and Exploit]
\label{cor:actual}
Let $M$ be an MDP with $K$ as the set of known states and $M_K$ be the induced MDP. If $\pi^*_{M_K}$ and $\pi^*_M$ be the optimal policies for $M_K$ and $M$ respectively, we have for all states $s$:
\begin{align*}
    V^{\pi^*_{M_K}}_M(s) \geq V^{*}_M(s) - \text{P}^{\pi}_M[\text{Escape to an unknown state}|s_0 = s]
\end{align*}
\end{corollary}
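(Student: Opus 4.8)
The plan is to chain the two inequalities of Lemma~\ref{lem:indineq} together with the defining optimality of $\pi^*_{M_K}$ in the induced MDP $M_K$; no new machinery beyond what is already established is needed.

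First I would invoke the second inequality of Lemma~\ref{lem:indineq} with the executed policy $\pi = \pi^*_{M_K}$, which gives
\[
V^{\pi^*_{M_K}}_M(s) \geq V^{\pi^*_{M_K}}_{M_K}(s) - \text{P}^{\pi^*_{M_K}}_M[\text{Escape to an unknown state}\mid s_0 = s].
\]
This isolates the quantity of interest as the value of $\pi^*_{M_K}$ inside $M_K$ (where it is optimal) minus the escape probability. Next I would lower-bound $V^{\pi^*_{M_K}}_{M_K}(s)$: since $\pi^*_{M_K}$ is by definition optimal for $M_K$, we have $V^{\pi^*_{M_K}}_{M_K}(s) = V^*_{M_K}(s) \geq V^{\pi^*_M}_{M_K}(s)$ for the competitor policy $\pi^*_M$. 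Finally I would apply the first inequality of Lemma~\ref{lem:indineq} to $\pi^*_M$, namely $V^{\pi^*_M}_{M_K}(s) \geq V^{\pi^*_M}_M(s) = V^*_M(s)$. Concatenating these three steps yields
\[
V^{\pi^*_{M_K}}_M(s) \geq V^*_M(s) - \text{P}^{\pi^*_{M_K}}_M[\text{Escape to an unknown state}\mid s_0 = s],
\]
which is exactly the claimed bound.

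The argument is essentially bookkeeping, so there is no serious obstacle; the one point requiring care is matching the policy that appears in the escape-probability term. The escape event must be measured under the policy actually executed, $\pi^*_{M_K}$ (the statement writes $\pi$, which should be read as this policy), and the optimistic detour through $V^{\pi^*_M}_{M_K}$ must use the first induced inequality in the correct direction (so that $M_K$ over-estimates $M$) while the second induced inequality controls the gap in the other direction only up to the escape probability. Getting the two inequalities pointed the right way, and remembering that the optimality comparison happens inside $M_K$ rather than in $M$, is the only place where a direction error could plausibly creep in.
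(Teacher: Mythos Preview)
Your proposal is correct and is exactly the standard chaining argument behind this corollary; the paper itself does not spell out a proof but simply cites Lemma~8.4.5 of \cite{kakade2003sample}, whose proof is precisely the three-step concatenation you describe. Your remark that the escape probability must be read under the executed policy $\pi^*_{M_K}$ is also the right clarification of the paper's loosely written $\pi$.
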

\begin{proof}
Follows from Lemma 8.4.5 from \cite{kakade2003sample}.
\end{proof}

\begin{proof}[\textbf{Proof of Lemma~\ref{lem:com}}]
Let $\pi^*_{M}$ be the optimal policy for $M$. Also, using the assumption about $m$, we have an $\epsilon/2$-approximation of $M_K$ as the MDP $\hat{M}_K$. Rmax computes the optimal policy for $\hat{M_K}$ which is denoted by $\hat{\pi}$. Then, by Lemma~\ref{lem:sim},
\begin{eqnarray*}
    V^{\hat{\pi}}_{M_K}(s) &\geq& V^{\hat{\pi}}_{\hat{M}_K}(s) - \epsilon/2 \\
    &\geq& V^{\pi^*_{M}}_{\hat{M}_K}(s) - \epsilon/2\\
    &\geq& V^{\pi^*_{M}}_{M_K}(s) - \epsilon
\end{eqnarray*}
Combining this with Lemma~\ref{lem:indineq}, we get
\begin{eqnarray*}
    V^{\hat{\pi}}_{M}(s) &\geq& V^{\hat{\pi}}_{M_K}(s) - \text{P}^{\pi}_M[\text{Escape to an unknown state}|s_0 = s]\\
    &\geq& V^{\pi^*_{M}}_{M_K}(s) - \epsilon - \text{P}^{\pi}_M[\text{Escape to an unknown state}|s_0 = s]\\
    &\geq& V^*_{M}(s) - \epsilon - \text{P}^{\pi}_M[\text{Escape to an unknown state}|s_0 = s]
\end{eqnarray*}
If this escape probability is less than $\epsilon$, then the desired relation is true.
Therefore, we need to bound the number of episodes where this expected number is greater than $\epsilon$. Note that, due to balanced wandering, we can have at most $mSA$ visits to unknown states for the Rmax algorithm. In the execution, we may encounter an extra $H-1$ visits as the estimates are updated only after the termination of an episode.

Whenever this quantity is more than $\epsilon$, the expected number of exploration steps in $mSA/\epsilon$ such episodes is at least $mSA$. By the Hoeffding's inequality, for $N$ episodes, with probability, at least $1-\delta$, the number of successful exploration steps is greater than 
\begin{align*}
N\epsilon - \sqrt{\frac{N}{2} \ln{\frac{1}{\delta}}}
\end{align*}
Therefore, if $N = \C{O}(\frac{mSA}{\epsilon}\ln{\frac{1}{\delta}})$, with probability at least $1-\delta$, the total number of visits to an unknown state is more than $mSA$. Using the upper bound on such visits, we conclude that these many episodes suffice.
\end{proof}

\subsection{Proof of Theorem~\ref{thm:pac}}
We now need to compute the required resolution of the cover and the number of transitions $m$ which will guarantee the approximation for the value functions as required in the previous lemma. The following result is the key result:

\begin{lemma}[Cover approximation]
\label{lem:covapp}
For a given CMDP and a finite cover, i.e., $\C{C} = \cup_{i=1}^{N(\C{C},r)} \C{B}_i$ such that $\forall i,\, \forall c_1,c_2 \in \C{B}_i$ :
\begin{align*}
    \|p^{c_1}(\cdot|s,a) - p^{c_2}(\cdot|s,a)\|_1 \leq \epsilon/8H
\end{align*}
and
\begin{align*}
    |r^{c_1}(s,a) - r^{c_2}(s,a)| \leq \epsilon/8
\end{align*}
if we visit every state-action pair $m = \frac{128(S\ln 2+\ln{\frac{SA}{\delta})H^2}}{\epsilon^2}$ times in a ball $\C{B}_i$ summing observations over all $c \in \C{B}_i$, then, for any policy $\pi$ and with probability at least $1-2\delta$, the approximate MDP $\hat{M}_i$ corresponding to $\C{B}_i$ computed using empirical averages will satisfy
\begin{align*}
    |V^\pi_{M_c} - V^\pi_{\hat{M}_i}| \leq \epsilon/2
\end{align*}
for all $c \in \C{B}_i$.
\end{lemma}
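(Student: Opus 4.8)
The plan is to reduce the claimed value approximation to a \emph{model} approximation and then invoke the simulation lemma (Lemma~\ref{lem:sim}). Concretely, I would show that for every $c \in \C{B}_i$ the pooled empirical model $\hat{M}_i$ defined through (\ref{eq:emp_freq}) satisfies $\|\hat{p}_i(\cdot|s,a) - p^c(\cdot|s,a)\|_1 \le \epsilon/4H$ and $|\hat{r}_i(s,a) - r^c(s,a)| \le \epsilon/4$ for all $(s,a)$. With $\epsilon_1 = \epsilon/4H$ and $\epsilon_2 = \epsilon/4$, Lemma~\ref{lem:sim} then yields $|V^\pi_{M_c} - V^\pi_{\hat{M}_i}| \le \epsilon_2 + H\epsilon_1 = \epsilon/4 + H\cdot(\epsilon/4H) = \epsilon/2$ for any policy $\pi$ and every $c \in \C{B}_i$ simultaneously, which is exactly the claim. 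Note that the concentration event below concerns $\hat{M}_i$ alone and the bias bound holds for every $c$ at once, so quantifying over all $c \in \C{B}_i$ costs nothing.

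The core of the argument is a bias--variance decomposition of the pooled estimate. Let $c_1,\dots,c_m \in \C{B}_i$ be the contexts of the $m$ observations used for a fixed pair $(s,a)$, and define the mixture $\bar{p}(\cdot|s,a) = \tfrac1m\sum_{j=1}^m p^{c_j}(\cdot|s,a)$, which is exactly the mean of the empirical frequencies $\hat{p}_i(\cdot|s,a)$. By the triangle inequality, $\|\hat{p}_i(\cdot|s,a) - p^c(\cdot|s,a)\|_1 \le \|\hat{p}_i(\cdot|s,a) - \bar{p}(\cdot|s,a)\|_1 + \|\bar{p}(\cdot|s,a) - p^c(\cdot|s,a)\|_1$. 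The second (bias) term is bounded by $\tfrac1m\sum_j \|p^{c_j}(\cdot|s,a) - p^c(\cdot|s,a)\|_1 \le \epsilon/8H$ directly from the within-ball hypothesis of the lemma, so it remains to control the first (sampling) term by $\epsilon/8H$.

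For the sampling term I would use a concentration bound for the $\ell_1$ deviation of a multinomial empirical distribution, via the variational identity $\|v\|_1 = \max_{f \in \{-1,1\}^{S}} \langle v, f\rangle$. For each fixed sign vector $f$, the quantity $\langle \hat{p}_i - \bar{p}, f\rangle$ is an average of $m$ independent, mean-zero random variables taking values in $[-1,1]$, so Hoeffding's inequality gives a one-sided bound $\exp(-m t^2/2)$ at deviation $t$; a union bound over the $2^{S}$ sign vectors yields $\B{P}(\|\hat{p}_i(\cdot|s,a) - \bar{p}(\cdot|s,a)\|_1 \ge \epsilon/8H) \le 2^{S}\exp(-m\epsilon^2/(128H^2))$. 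Substituting the prescribed $m = \tfrac{128(S\ln 2+\ln(SA/\delta))H^2}{\epsilon^2}$ makes the exponential equal to $2^{-S}\delta/(SA)$, so the failure probability per pair is at most $\delta/(SA)$. The reward estimate is handled identically and more easily (being scalar and $[0,1]$-bounded) by Hoeffding, with the same split into an $\epsilon/8$ bias term and an $\epsilon/8$ sampling term. A union bound over all $SA$ pairs, together with the two failure modes (transitions and rewards), yields total failure probability at most $2\delta$, i.e.\ the stated $1-2\delta$ confidence.

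The main obstacle is that the pooled samples are \emph{not} i.i.d.: the contexts $c_j$ are chosen adversarially and the per-visit next states are drawn from different distributions $p^{c_j}(\cdot|s,a)$. This is precisely why concentrating $\hat{p}_i$ directly onto a single $p^c$ fails, and why the mixture mean $\bar{p}$ must be split off and its gap to each $p^c$ absorbed by smoothness. The concentration step then only requires \emph{independence}, not identical distribution, which holds because the next state at each visit to $(s,a)$ is conditionally independent of the past given $(s,a)$ by the Markov property, and balanced wandering guarantees these $m$ visits actually occur. Making this conditional-independence argument precise---by conditioning on the sequence of visited contexts and applying Hoeffding (or Azuma) along the induced filtration---is the one technical point that needs care; the remaining constant bookkeeping to match the prescribed $m$ is routine.
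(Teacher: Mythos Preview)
Your proposal is correct and follows essentially the same route as the paper: split off the mixture mean $\bar p = \tfrac1m\sum_j p^{c_j}$, bound the bias term by the within-ball hypothesis, control the sampling term by a union over $2^{S}$ patterns plus Hoeffding, take a union bound over all $(s,a)$, and finish with the simulation lemma. The only cosmetic difference is that the paper writes the $\ell_1$ deviation via subsets $A\subseteq\C{S}$ (total-variation form) rather than sign vectors $f\in\{-1,1\}^{S}$, and your explicit remark about the non-i.i.d.\ samples and the use of conditional independence along the filtration is more careful than the paper, which applies Hoeffding without comment.
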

\begin{proof}

For each visit to a state action pair $(s,a)$, we observe a transition to some $s' \in \C{S}$ for context $c_t \in \C{B}_i$ in $t_{\text{th}}$ visit with probability $p_{c_t}(s,a)$. Let us encode this by an $S$-dimensional vector $I_t$ with $0$ at all indices except $s'$. After observing $m$ such transitions, we create the estimate for any $c \in \C{B}_i$ as $p_{\hat{M}_i}(\cdot|s,a) = \tfrac{1}{m} \sum_{t=1}^{m} I_t$.
Now for all $c \in \C{B}_i$, 
\begin{align*}
    \|p_{\hat{M}_i}(\cdot|s,a) - p^c(\cdot|s,a)\|_1 \leq \|p_{\hat{M}_i}(\cdot|s,a) - \frac{1}{m}\sum_{t=1}^{m} p^{c_t}(\cdot|s,a)\|_1 + \epsilon/8H
\end{align*}
For bounding the first term, we use the Hoeffding's bound:
\begin{small}
\begin{eqnarray*}
P\Big[\,\|p_{\hat{M}_i}(\cdot|s,a) - \frac{1}{m}\sum_{t=1}^{m} p^{c_t}(\cdot|s,a)\|_1 \geq \epsilon\Big] &=& P\Big[\max_{s' \in A \subseteq \C{S}} (p_{\hat{M}_i}(s' \in A|s,a) - \frac{1}{m}\sum_{t=1}^{m} p^{c_t}(s' \in A|s,a)) \geq \epsilon/2\Big] \\
&\leq& \sum_{s' \in A \subseteq \C{S}} P\Big[ (p_{\hat{M}_i}(s' \in A|s,a) - \frac{1}{m}\sum_{t=1}^{m} p^{c_t}(s' \in A|s,a)) \geq \epsilon/2\Big]\\
&\leq& (2^S - 2) \exp (-m\epsilon^2/2)
\end{eqnarray*}
\end{small}
Therefore, with probability at least $1-\delta/2$, for all $s \in \C{S}, a \in \C{A}$, we have:
\begin{align*}
    \|p_{\hat{M}_i}(\cdot|s,a) - p^c(\cdot|s,a)\|_1 \leq \sqrt{\frac{2(S \ln 2 + \ln{2SA/\delta})}{m}} + \epsilon/8H
\end{align*}

If $m = \frac{128(S\ln 2+\ln{\frac{SA}{\delta})H^2}}{\epsilon^2} $, the error becomes $\epsilon/4H$. One can easily verify using similar arguments that, the error in rewards for any context $c \in \C{B}_i$ is less than $\epsilon/4$. 

By using the simulation lemma~\ref{lem:sim}, we get the desired result.
\end{proof}

\section{Lower bound analysis}
\subsection{Proof of Claim~\ref{cl:val_SCMDP}}
Once the instances at the packing points are assigned, the parameters for any other context $c \in \C{C}$, state $i$ and action $a$ are given by:
\begin{align*}
p_c(+|i,a) = \max_{c' \in Z} \max (1/2, p_{c'}(+|i,a) - \dis{c}{c'}/2)
\end{align*}
We now prove that, with this definition, the smoothness requirements are satisfied:
\begin{claim}
The contextual MDP defined above is a valid instance of a contextual MDP with smoothness constants $L_p = 1$. (The reward function does not vary with context hence reward smoothness is satisfied for all $L_r\ge 0$.).
\end{claim}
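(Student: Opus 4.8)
The plan is to reduce the vector-valued smoothness condition of Definition~\ref{def:smcmdp} to a one-dimensional Lipschitz statement about the scalar map $c \mapsto p_c(+|i,a)$, and then invoke the elementary fact that a pointwise maximum of Lipschitz functions is Lipschitz. First I would observe that in the hard instance of Figure~\ref{fig:lowbnd} the only context-dependent part of the dynamics is the branching probability at the bandit states: state $0$ always transitions to the uniform distribution over $\{1,\dots,n\}$ irrespective of action and context, and the states $+$ and $-$ are absorbing, so these transition rows never change with $c$. Hence for every state $i$ and action $a$,
\begin{align*}
\|p^{c_1}(\cdot|i,a) - p^{c_2}(\cdot|i,a)\|_1 &= |p_{c_1}(+|i,a) - p_{c_2}(+|i,a)| + |p_{c_1}(-|i,a) - p_{c_2}(-|i,a)| \\
&= 2\,|p_{c_1}(+|i,a) - p_{c_2}(+|i,a)|,
\end{align*}
so it suffices to prove $|p_{c_1}(+|i,a) - p_{c_2}(+|i,a)| \le \dis{c_1}{c_2}/2$ for all $i,a$. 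This is exactly where the factor $1/2$ in the defining formula is engineered to land, yielding $L_p = 1$.

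For the Lipschitz step, fix $i$ and $a$ and write $g(c) = p_c(+|i,a) = \max_{c' \in Z} f_{c'}(c)$ with $f_{c'}(c) = \max\big(1/2,\, p_{c'}(+|i,a) - \dis{c}{c'}/2\big)$. I would argue that each $f_{c'}$ is $\tfrac{1}{2}$-Lipschitz in $c$: the constant $1/2$ is trivially so, and since $\phi$ is a metric the triangle inequality gives $|\dis{c_1}{c'} - \dis{c_2}{c'}| \le \dis{c_1}{c_2}$, making $c \mapsto p_{c'}(+|i,a) - \dis{c}{c'}/2$ a $\tfrac12$-Lipschitz function; a pointwise maximum of two $\tfrac12$-Lipschitz functions is again $\tfrac12$-Lipschitz. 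Because $Z$ is finite, $g = \max_{c' \in Z} f_{c'}$ is a finite maximum of $\tfrac12$-Lipschitz functions and is therefore $\tfrac12$-Lipschitz, i.e. $|g(c_1) - g(c_2)| \le \dis{c_1}{c_2}/2$, which is the desired bound.

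Finally I would verify the two consistency/validity conditions the construction implicitly assumes. Validity follows because $g(c) \ge 1/2$ by the outer maximum and $g(c) \le \max_{c'} p_{c'}(+|i,a) \le \tfrac12 + \epsilon' \le 1$, so $p_c(\cdot|i,a)$ is a genuine distribution; agreement with the assigned instances at the packing points follows by checking that for $c = c^* \in Z$ the term $c' = c^*$ contributes $p_{c^*}(+|i,a)$ while every other $c' \in Z$ contributes only $1/2$, since the packing radius $r = 8\epsilon'$ forces $p_{c'}(+|i,a) - \dis{c^*}{c'}/2 \le \tfrac12 + \epsilon' - 4\epsilon' < \tfrac12$. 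Reward smoothness is immediate because rewards are context-independent. The argument is essentially routine; the only place demanding care is the reduction in the first paragraph --- making sure the $L_1$ norm over the whole next-state distribution collapses to exactly twice the scalar gap, so that the built-in factor $1/2$ delivers $L_p = 1$ rather than $L_p = 2$.
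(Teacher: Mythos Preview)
Your proposal is correct and takes a genuinely different --- and cleaner --- route than the paper.

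The paper argues by contradiction: assuming a violating pair $(c_1,c_2)$, it uses the range bound $p_c(+|i,a)\in[1/2,\,1/2+\epsilon']$ to force $\phi(c_1,c_2)<2\epsilon'$, then exploits the $8\epsilon'$-packing geometry of $Z$ to show that for both $c_1$ and $c_2$ the outer maximum in the definition of $p_c(+|i,a)$ is achieved at a single common packing point $c_0\in Z$, after which a direct computation with the triangle inequality yields the contradiction. Your argument bypasses this localization step entirely by observing that $c\mapsto p_c(+|i,a)$ is a pointwise maximum of $\tfrac12$-Lipschitz functions and hence itself $\tfrac12$-Lipschitz --- a fact that holds irrespective of how $Z$ sits in $\C{C}$. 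This is more elementary and more robust: it would go through verbatim even without the packing separation, whereas the paper's proof is tied to the specific radius $8\epsilon'$. Your additional verifications (that $p_c(\cdot|i,a)$ is a genuine probability vector and that the interpolation formula reproduces the prescribed instances on $Z$) are not part of the paper's proof of the claim but are sensible consistency checks for the overall construction.
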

\begin{proof}
We need to prove that the defined contextual MDP, satisfies the constraints in Definition \ref{def:smcmdp}. Let us assume that the smoothness assumption is violated for a context pair $(c_1,c_2)$. The smoothness constraints for rewards are satisfied trivially for any value of $L_r$ as they are constant. This implies that there exists state $i \in [n]$ and action $a$ such that
\begin{align*}
\|p_{c_1}(\cdot|i,a) - p_{c_2}(\cdot|i,a)\|_1 > \dis{c_1}{ c_2}\\
\Rightarrow 2|p_{c_1}(+|i,a) - p_{c_2}(+|i,a)| > \dis{c_1}{ c_2}
\end{align*}
We know that, $p_c(+|i,a) \in [1/2, 1/2+\epsilon']$, which shows that $\dis{c_1}{ c_2} < 2\epsilon'$. Without loss of generality, assume $p_{c_1}(+|i,a) > p_{c_2}(+|i,a)$ which also leads to 
\begin{align*}
p_{c_1}(+|i,a) > 1/2\\
\Rightarrow \exists c_0 \in Z \text{ such that } \dis{c_1}{ c_0} < 2\epsilon'
\end{align*}
By triangle inequality, we have
\begin{align*}
    \dis{c_2}{ c_0} < 4\epsilon'
\end{align*}
Now, $\forall c_0' \in Z$, such that $\dis{c_0'}{c_0} \geq 8\epsilon'$, by triangle inequality, we have
\begin{align*}
    \dis{c_0'}{c_1} > 6\epsilon'\\
    \dis{c_0'}{c_2} > 4\epsilon'
\end{align*}
This simplifies the definition of $p_c(+|i,a)$ for $c=c_1,c_2$ to
\begin{align*}
    p_c(+|i,a) = \max(1/2, p_{c_0}(+|i,a) - \dis{c_0}{c}/2)
\end{align*}
Now,
\begin{eqnarray*}
|p_{c_1}(+|i,a) - p_{c_2}(+|i,a)| &=& p_{c_1}(+|i,a) - p_{c_2}(+|i,a)\\
&=& p_{c_0}(+|i,a) - \dis{c_0}{c_1}/2 - \max(1/2, p_{c_0}(+|i,a) - \dis{c_0}{c_2}/2)\\
&\leq& p_{c_0}(+|i,a) - \dis{c_0}{c_1}/2 - (p_{c_0}(+|i,a) - \dis{c_0}{c_2}/2)\\
&=& \tfrac{1}{2}(\dis{c_0}{c_2} - \dis{c_0}{c_1}) \leq \dis{c_1}{c_2}/2
\end{eqnarray*}
which leads to a contradiction.
\end{proof}

\subsection{Lower bound for smooth CMDP}
In the lower bound construction, we made a key argument that the defined contextual mapping with specific instances, requires any agent to learn the models of a significant number of MDPs separately to get decent generalization. The construction populates a set of packing points in the context space with hard MDPs and argues that these instances are independent of each other from the algorithm's perspective. To formalize this statement, let $Z$ be the $8\epsilon'$-packing as before. The adversary makes the choices of the instances at each context $c \in Z$, as follows: Select an MDP from the family of hard instances described in Figure~\ref{fig:lowbnd} where the optimal action from each state in $\{1,\ldots, n\}$ is chosen randomly and independently from the other assignments. The parameter $\epsilon'$ deciding the difference in optimality of actions in Figure~\ref{fig:lowbnd} is taken as $\tfrac{160 H \epsilon e^4}{(H-2)}$ which is required by the construction of Theorem~\ref{thm:mdp_low}.

We denote these instances by the set $\C{I}$ and an individual instance by $\C{I}_z$ where $z = \{z_1, z_2,\ldots,z_{|Z|}\}$ is the set of these random choices made by the adversary. By construction, we have a uniform distribution $\Gamma \equiv \Gamma_1 \times \Gamma_2 \times \ldots \Gamma_{|Z|}$ over these possible set of instances $\C{I}_z$. From claim 5.3, any assignment of optimal actions to these packing points would define a valid smooth contextual MDP. Further, the independent choice of the optimal actions makes MDPs at each packing point at least as difficult as learning a single MDP. Formally, let the sequence of transitions and rewards observed by the learning agent  for all packing points be $T \equiv \{\tau_1, \tau_2, \ldots, \tau_{|Z|}\}$. Due to the independence between individual instances, we can see that:
\begin{align*}
    P_{\Gamma}[\,\tau_1|\tau_2, \tau_3, \ldots, \tau_{|Z|}\,] \equiv P_{\Gamma_1}[\,\tau_1\,]
\end{align*}
where $P_{\Gamma}[\tau_i]$ denotes the distribution of trajectories $\tau_i$.
Thus, observing trajectories from MDP instances at other packing points does not let the algorithm deduce anything about the nature of the instance chosen for one point. With respect to this distribution, learning in the contextual MDP is equivalent to or worse than simulating a single MDP learning algorithm at each of these packing points. For any given contextual MDP algorithm $\texttt{Alg}$, we have:
\begin{align*}
    \B{E}_{\Gamma}[B_i] = \B{E}_{\Gamma_{-i}}[\B{E}_{\Gamma_i, \texttt{Alg}}[B_i | T_{-i}]] \geq \B{E}_{\Gamma_{-i}}[\B{E}_{\Gamma_1, \texttt{Alg*}} [B_i]]
\end{align*}
where \texttt{Alg*} is an optimal single MDP learning algorithm. The expectation is with respect to the distribution over the instances $\C{I}_{z}$ and the algorithm's randomness. From Theorem~\ref{thm:mdp_low}, we can lower bound the expectation on the right hand side of the inequality by $\Omega\Big( \tfrac{SA}{\epsilon^2}\Big)$.

The total number of mistakes is lower bounded as:
\begin{align*}
\B{E}_{\Gamma} [\sum_{i=1}^{|Z|} B_i] = \sum_{i=1}^{|Z|} \B{E}_{\Gamma}[B_i] \geq \Omega\Big( \tfrac{|Z|SA}{\epsilon^2}\Big)
\end{align*}

Setting $|Z| = \C{D}(\C{C},\epsilon_1) \leq \C{N}(\C{C},\epsilon_1)$ gives the stated lower bound with $\epsilon_1 = 8\epsilon'$.

\section{Proof of the Theorem~\ref{thm:kwiklr}}
In this section, we present the proof of our KWIK bound for learning transition probabilities. Our proof uses a reduction technique that reduces the vector-valued label setting to the scalar setting, and combines the KWIK bound for scalar labels given by \citet{walsh2009exploring}.  
\begin{proof}
Fix a state action pair $(s,a)$. Consider a sequence of contexts $c_1, c_2,...$ for which the transitions were observed for pair $(s,a)$. Given a new context $c$, we want to estimate:
\begin{align*}
    p^c(\cdot|s,a) = c^\top P(s,a)
\end{align*}

In our KWIK\_LR algorithm, we estimate this as:
\begin{align*}
    \hat{p}^c(\cdot|s,a) = c^\top Q(s,a)W(s,a)
\end{align*}
where $Q(s,a)$ and $W(s,a)$ are as described in Section~\ref{sec:kwikrmax}.

We wish to bound the $\ell_1$ error between $\hat{p}^c(\cdot|s,a)$ and $p^c(\cdot|s,a)$ for all $c$ for which a prediction is made. We know that
\begin{align} \label{eq:l1_f}
    \|p^c(\cdot|s,a) - \hat{p}^c(\cdot|s,a)\|_1 = \sup_{f \in \{-1,1\}^S} \,(p^c(\cdot|s,a) - \hat{p}^c(\cdot|s,a))f.
\end{align}
We use this representation of $\ell_1$-norm to prove a tighter KWIK bound for learning transition probabilities. For every fixed $f \in \{-1,1\}^S$, we formulate a new linear regression problem with feature-label pair:
\begin{align*}
    (c, y f).
\end{align*}
Recall that $y = (\{\B{I}[s_{\text{next}} = s']\}_{\forall s' \in \C{S}})^\top$ is the vector label of real interest, and $f$ projects $y$ to a scalar value. 
Algorithm~\ref{alg:KWIK} can be viewed as implicitly running this regression thanks to linearity: since $Q$ only depends on input contexts and $W$ is linear in $y$, $\hat{p}^c(\cdot|s,a) f$ is simply equal to the linear regression prediction for the problem $(c, yf)$. As a result, the KWIK bound for the problem $(c, yf)$ (which we establish below) automatically applies as a property of $\hat{p}^c(\cdot|s,a) f$. Taking union bound over all $f \in \{-1, 1\}^S$ yields the desired $\ell_1$ error guarantee for $\hat{p}^c(\cdot|s,a)$ thanks to Equation~\ref{eq:l1_f}.

Now we establish the KWIK guarantee for the new regression problem. The groundtruth (expected) label is
\begin{align}
\label{eq:theta^f}
p^c(\cdot|s,a) f = c^\top (P(s,a) f) := c^\top \theta^f.
\end{align}
The noise in the label is then
\begin{align}
\label{eq:nlr}
    \eta^f := (y - p^c(\cdot|s,a)) f.
\end{align}
This noise has zero-mean and constant magnitude: $|\eta^f|  \le \|y - p^c(\cdot|s,a)\|_1 \|f\|_{\infty} \leq 2$.

With the above conditions, we can invoke the KWIK bound for scalar linear regression from \cite{walsh2009exploring}:
\begin{theorem}[KWIK bound for linear regression\citep{walsh2009exploring}]
\label{thm:walsh}
Suppose the observation noise in a noisy linear regression problem has zero-mean and its absolute value is bounded by $\beta$. Let $M$ be an upper bound on the $\ell_2$ norm of the true linear coefficients. For any $\delta' >0$ and $\epsilon > 0$, if the KWIK linear regression algorithm is executed with $\alpha_0 = \min \{ b_1 \tfrac{\epsilon^2}{dM}, b_2 \tfrac{\epsilon^2}{M\log (d/\delta')} , \tfrac{\epsilon}{2M}\}$, with suitable constants $b_1$ and $b_2$, then the number of $\perp$'s will be $O( M^2 \max \{ \tfrac{d^3}{\epsilon^4}, \tfrac{d \log^2(d/\delta')}{\epsilon^4}\} )$, and with probability at least $1-\delta'$, for each sample $x_t$ for which a prediction is made, the prediction is $\epsilon$-accurate.
\end{theorem}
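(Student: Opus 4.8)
The plan is to prove the two assertions of Theorem~\ref{thm:walsh} separately: (a) every non-``$\perp$'' prediction is $\epsilon$-accurate with probability $1-\delta'$, and (b) the number of ``$\perp$''s obeys the stated bound. I would follow the regularized-least-squares analysis of \citet{walsh2009exploring} (building on the online linear regression of Strehl and Littman). Let $X$ denote the design matrix whose rows are the contexts $x_\tau$ on which an update occurred, let $\eta$ be the corresponding noise vector, and write the estimator in the form used by Algorithm~\ref{alg:KWIK}, namely $\hat\theta = Q X^\top Y$ with $Q = (I + X^\top X)^{-1}$. Throughout I take the standing normalization $\|x_\tau\|_2 \le 1$ (in our application $x_\tau = c \in \Delta_{d-1}$), and the scalar decision rule that predicts $\perp$ exactly when $\|Q x\|_2 > \alpha_0$.

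For part (a), I would substitute $Y = X\theta^* + \eta$ and use $Q X^\top X = I - Q$ to obtain the exact error decomposition at a query $x$:
\[
x^\top(\hat\theta - \theta^*) = -\,x^\top Q\theta^* + x^\top Q X^\top \eta,
\]
a \emph{regularization-bias} term and a \emph{noise} term. The bias is governed purely by the decision rule: by symmetry of $Q$ and Cauchy--Schwarz, $|x^\top Q\theta^*| \le \|Qx\|_2\,\|\theta^*\|_2 \le \alpha_0 M$, so requiring $\alpha_0 \le \epsilon/(2M)$ (the third term in the min) forces the bias below $\epsilon/2$. For the noise term I would write it as $(Q^{1/2}x)^\top(Q^{1/2}X^\top\eta)$ and apply Cauchy--Schwarz to get $|x^\top Q X^\top\eta| \le \sqrt{x^\top Qx}\,\|X^\top\eta\|_Q$; the first factor is at most $\sqrt{\alpha_0}$ on any predicted point (since $x^\top Qx \le \|x\|_2\|Qx\|_2 \le \alpha_0$), while the second factor is \emph{independent of the query} $x$ and is exactly the self-normalized martingale quantity. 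Bounding it by a method-of-mixtures / covering tail bound gives $\|X^\top\eta\|_Q^2 \lesssim \beta^2(\log\det(Q^{-1}) + \log(1/\delta'))$, and since $\log\det(Q^{-1}) = \log\det(I+X^\top X) \le d\log(1+m/d)$ this splits into a dimension part $\propto d$ and a confidence part $\propto \log(1/\delta')$. Forcing $\sqrt{\alpha_0}\,\|X^\top\eta\|_Q \le \epsilon/2$ then yields the two constraints $\alpha_0 \lesssim \epsilon^2/(dM)$ and $\alpha_0 \lesssim \epsilon^2/(M\log(d/\delta'))$ appearing in the definition of $\alpha_0$ (with the noise scale absorbed into $b_1,b_2$). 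Because the self-normalized factor does not depend on $x$, a single high-probability event covers \emph{all} (possibly infinitely many) predicted points at once.

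For part (b), I would run a determinant potential argument on $\Phi = \log\det(Q^{-1}) = \log\det(I + X^\top X)$. Each update is the rank-one Sherman--Morrison step of Line~\ref{lin:Q_upd}, i.e. $Q^{-1} \mapsto Q^{-1} + xx^\top$, so $\Phi$ increases by exactly $\log(1 + x^\top Qx)$. An update occurs only on a ``$\perp$'', where $\|Qx\|_2 > \alpha_0$; since $Q \preceq I$ implies $Q^2 \preceq Q$, we have $x^\top Qx \ge \|Qx\|_2^2 > \alpha_0^2$, so each of the $m$ updates raises $\Phi$ by at least $\log(1+\alpha_0^2)$. Comparing with the upper bound $\Phi \le d\log(1 + m/d)$ (AM--GM on the eigenvalues of $X^\top X$, whose trace is at most $m$) gives $m\log(1+\alpha_0^2) \le d\log(1+m/d)$. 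Solving this transcendental inequality yields $m = \tilde{\C{O}}(d/\alpha_0^2)$, and substituting the chosen $\alpha_0$ (whose reciprocal square is $\Theta(M^2\max\{d^2/\epsilon^4, \log^2(d/\delta')/\epsilon^4\})$) produces the claimed $\C{O}(M^2\max\{d^3/\epsilon^4, d\log^2(d/\delta')/\epsilon^4\})$.

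The main obstacle is the noise term in part (a): the weights $x^\top Q x_\tau$ depend on the entire $\perp$-history, and the contexts (hence $Q$) are chosen adaptively, indeed adversarially, so a naive Hoeffding bound over independent summands is unavailable. Routing the argument through the query-independent quantity $\|X^\top\eta\|_Q$ and controlling it with a self-normalized martingale tail bound is what makes the uniform-over-all-predictions guarantee go through and what produces the precise $d$ versus $\log(d/\delta')$ split reflected in $\alpha_0$; matching the exact constants $b_1,b_2$ is then a matter of careful bookkeeping of the noise scale $\beta$ and the logarithmic factors.
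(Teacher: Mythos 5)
You should first note that the paper does not prove this statement at all: Theorem~\ref{thm:walsh} is imported verbatim from \citet{walsh2009exploring} and used as a black box inside the proof of Theorem~\ref{thm:kwiklr}, so there is no in-paper argument to match. Your reconstruction is a legitimate and essentially correct proof sketch, but it takes a different technical route from the cited source. You control the noise term by passing through the query-independent quantity $\|X^\top\eta\|_Q$ and a method-of-mixtures self-normalized martingale bound, and you count $\perp$'s with a $\log\det$ potential; Walsh et al.\ (following Strehl and Littman's online linear regression) instead decompose the error via the singular value decomposition of the design matrix, splitting into directions with singular values above and below $1$, bounding the high-singular-value part of the noise with Azuma's inequality and the low-singular-value part deterministically through $\|Qx\|_2$ --- that split is precisely where the two separate terms $\epsilon^2/(dM)$ and $\epsilon^2/(M\log(d/\delta'))$ in $\alpha_0$ come from, whereas in your argument they emerge from the $\log\det(Q^{-1})$ versus $\log(1/\delta')$ split of the self-normalized bound. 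Your route is cleaner and gives uniformity over all predicted queries for free, but it is slightly lossier: the self-normalized bound contributes an extra factor of $\log\det(I+X^\top X)\le d\log(1+m/d)$, and your potential argument yields $m\log(1+\alpha_0^2)\le d\log(1+m/d)$, so both the accuracy condition and the $\perp$-count acquire logarithmic factors in $m$ and $1/\alpha_0$ that are absent from the clean statement $O(M^2\max\{d^3/\epsilon^4,\,d\log^2(d/\delta')/\epsilon^4\})$; likewise the noise scale $\beta$ appears in your constraints where the theorem's constants carry $M$. These are $\tilde{O}$-level discrepancies rather than gaps, and you correctly flag that matching $b_1,b_2$ requires bookkeeping, but if you want the theorem exactly as stated you would need to follow the SVD-based argument of the original reference rather than the linear-bandit-style machinery.
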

For our purpose, $\beta = 2$ as $|\eta^f|\le 2$ and $M = \sqrt{d}$ as $\|\theta^f\|_2 = \|P(s,a)f\|_2 \leq \sqrt{d}$.

Now set $\delta' = \tfrac{\delta}{2^S}$ in Theorem~\ref{thm:walsh}. In the KWIK linear regression algorithm, the \emph{known} status for a context $c$ is checked in the same manner as done in Line~\ref{lin:kwik_known} in Algorithm~\ref{alg:KWIK}. Therefore
\begin{align*}
&~ Pr\Big[\|p^c(\cdot|s,a) - \hat{p}^c(\cdot|s,a)\|_1 \ge \epsilon\Big] \\
=&~ Pr\Big[\sup_{f \in \{-1,1\}^S}  \,(p^c(\cdot|s,a) - \hat{p}^c(\cdot|s,a))f \ge \epsilon\Big] \tag{Equation~\ref{eq:l1_f}}\\
\le&~ \sum_{f \in \{-1,1\}^S} Pr\Big[(p^c(\cdot|s,a) - \hat{p}^c(\cdot|s,a))f \ge \epsilon\Big] \tag{union bound}\\
=&~ \sum_{f \in \{-1,1\}^S} Pr\Big[c^\top \theta^f - c^\top Q(s,a) (W(s,a)f) \ge \epsilon\Big] \tag{regression w.r.t.~$f$ implicitly run}\\
\le&~ \sum_{f \in \{-1,1\}^S} \delta / 2^S
= \delta.
\end{align*}

Substituting the values of $M$ and $\delta'$ in Theorem~\ref{thm:walsh}, we get:
\begin{align*}
    \alpha_S = \min \{ b_1 \tfrac{\epsilon^2}{d^{3/2}}, b_2 \tfrac{\epsilon^2}{\sqrt{d}\log (d/2^S \delta')} , \tfrac{\epsilon}{2\sqrt{d}}\}
\end{align*}
and number of $\perp$'s is bounded as
\begin{align*}
    O(  \max \{ \tfrac{d^4}{\epsilon^4}, \tfrac{d^2S^2 \log^2(d/\delta')}{\epsilon^4}\} ). \tag*{\qedhere}
\end{align*}
\end{proof}

\end{document}